\journal{Computer Methods in Applied Mechanics and Engineering}
\newcommand{\sci}[2]{\ensuremath{#1\times10^{#2}}}
\newtheorem{theorem}{Theorem}
\begin{document}
\begin{frontmatter}

\title{SetONet: A Set-Based Operator Network for Solving PDEs with Variable-Input Sampling} 

\author[1]{Stepan Tretiakov\corref{cor1}}
\ead{stepan@utexas.edu}

\author[2]{Xingjian Li}
\ead{xingjian.li@austin.utexas.edu}

\author[1,2]{Krishna Kumar}
\ead{krishnak@utexas.edu}

\cortext[cor1]{Corresponding author.}

\affiliation[1]{organization={Maseeh Department of Civil, Architectural and Environmental Engineering},
            addressline={University of Texas at Austin}, 
            city={Austin},
            postcode={78712}, 
            state={TX},
            country={USA}}
\affiliation[2]{organization={Oden Institute for Computational Engineering and Sciences},
            addressline={University of Texas at Austin}, 
            city={Austin},
            postcode={78712}, 
            state={TX},
            country={USA}}

\begin{abstract}
Many neural-operator surrogates for PDEs inherit a restrictive assumption from DeepONet-style formulations: the input function must be sampled at a fixed, ordered set of sensors. This assumption limits applicability to problems with variable sensor layouts, missing data, point sources, and sample-based representations of densities. We propose SetONet, which addresses this gap by recasting the operator input as an unordered set of coordinate–value observations and encoding it with permutation-invariant aggregation inside a standard branch–trunk operator network while preserving the DeepONet synthesis mechanism and lightweight end-to-end training. A structured variant, SetONet-Key, aggregates sensor information through learnable query tokens and a position-only key pathway, thereby decoupling sampling geometry from sensor values. The method is assessed on four classical operator-learning benchmarks under fixed layouts, variable layouts, and evaluation-time sensor drop-off, and on four problems with inherently unstructured point-cloud inputs, including heat conduction with multiple point sources, advection–diffusion, phase-screen diffraction, and optimal transport problems. In parameter-matched studies, SetONet-Key achieves lower error than the DeepONet baseline on fixed-sensor benchmarks and remains reliable when layouts vary or sensors are dropped at evaluation. Comparisons across pooling rules show that attention-based aggregation is typically more robust than mean or sum pooling. On the point-cloud problems, SetONet operates directly on the native input representation, without rasterization or multi-stage preprocessing, and outperforms the larger VIDON baseline.
\end{abstract}            

\begin{keyword}
Operator Learning \sep Neural Operators \sep Set Encoder \sep DeepONet \sep PDE Modeling
\end{keyword}

\end{frontmatter}


\section{Introduction}
\label{sec:intro}

Many families of partial differential equations (PDEs) induce a \emph{solution operator} that maps an input function (e.g., a forcing term, a coefficient field, or boundary data) to the corresponding solution field.
A neural operator~\cite{kovachki2023neural, azizzadenesheli2024neural} is a deep neural network that seeks to learn such a mapping $\mathcal{T}:\mathcal{G}\to\mathcal{H}$ between (typically infinite-dimensional) spaces of input and output functions.
Learning $\mathcal{T}$ from data enables fast surrogate models that amortize expensive numerical solves and can be evaluated at arbitrary query locations, making neural operators attractive for scientific and engineering modeling~\cite{oommen2022learning,kurth2023fourcastnet,wang2024deep}.

A broad ecosystem of operator-learning architectures has emerged.
Deep Operator Networks (DeepONets)~\cite{lu2021learning} follow a branch--trunk template motivated by universal approximation results for nonlinear operators~\cite{chen1995universal}: the branch network maps the input function to coefficients, the trunk network maps the query location to basis functions, and the solution is synthesized by combining the two.
Other families parameterize operators via learned integral transforms, including the Fourier Neural Operator (FNO)~\cite{li2021fourier,kovachki2021universal}, Wavelet Neural Operator (WNO)~\cite{tripura2023wavelet}, Graph Kernel Network (GKN)~\cite{anandkumar2020neural}, Laplace Neural Operator (LNO)~\cite{cao2024laplace}, and related spectral and convolutional variants~\cite{fanaskov2023spectral,raonic2023convolutional,wei2023super,shih2025transformers}.
More recently, transformer-based models have adapted self-attention to operator learning, including the In-Context Operator Network~\cite{yang2023context}, the Operator Transformer (OFormer)~\cite{li2022transformer}, and the General Neural Operator Transformer (GNOT)~\cite{hao2023gnot}, often improving empirical performance while increasing computational and architectural complexity.

Despite this progress, a central practical challenge remains: \emph{how input functions are observed}.
In many applications, the input is not naturally available as values on a fixed grid. Instead, it may be measured by sensors at varying locations, contain missing readings, arrive as unordered samples, or be intrinsically defined as a discrete set (e.g., point sources or particle samples).
In these settings, the most faithful representation of the available information is an unordered set of location--value pairs, and a neural operator should (i) be \emph{permutation invariant} with respect to the ordering of observations and (ii) accommodate \emph{variable cardinality} without resorting to ad hoc imputation or a prescribed discretization.

This mismatch is particularly explicit in standard DeepONet formulations~\cite{lu2019deeponet, lu2021learning}.
DeepONet's branch network encodes the input function into a fixed-length vector tied to predetermined sensor locations. Consequently, varying sensor layouts or inherently unstructured inputs fall outside the model's native interface and require additional machinery in practice~\cite{prasthofer2022variable, karumuri2024efficient}.
At the same time, retaining the conceptual and computational simplicity of the branch--trunk construction is desirable, especially in PDE settings where lightweight models and stable end-to-end training are often preferred.

We introduce \emph{SetONet}, a set-based operator network that removes the fixed-sensor assumption while preserving the DeepONet synthesis mechanism.
The key idea is to represent the input function $g$ as an unordered set of samples $\{(\boldsymbol{x}_i, g(\boldsymbol{x}_i))\}_{i=1}^M$ and to replace the standard DeepONet branch network with a novel permutation-invariant \emph{set encoder} extended from Deep Sets~\cite{zaheer2018deepsets}.
As in DeepONet, the output function is expressed in a finite learned basis; SetONet is designed so that the set encoder processes input functions as set representations and produces the corresponding operator coefficients aligned with the learned basis functions, enabling an end-to-end mapping from variable-size sample sets to solution fields.
Building on classical aggregation mechanisms, in this work we primarily adopt a lightweight token--sensor aggregation that uses learnable query tokens and position-derived keys. This design, which we refer to as \emph{SetONet-Key}, introduces a position-only key pathway while keeping the value pathway lightweight, enabling geometry-aware aggregation that decouples sensor geometry from sensor values and improves expressivity. See \cref{fig:setonet_architecture}.

A key design goal is to broaden admissible input representations without sacrificing the efficiency of DeepONet-style models.
SetONet retains the standard branch--trunk synthesis structure and is trained end-to-end, avoiding the multi-stage pipelines used by some basis-projection~\cite{ingebrand2025basis} or implicit-representation approaches~\cite{bahmani2024resolutionindependentneuraloperator}.
Through careful architectural choices, we maintain parameter counts and training cost comparable to a standard DeepONet baseline.
When inputs are sampled on fixed grids, SetONet recovers DeepONet-style behavior, making it a drop-in replacement that substantially broadens the class of admissible inputs.

We summarize the main contributions of this paper as follows:
\begin{itemize}
    \item \textbf{A set-based operator network for variable and unstructured input sampling.}
    We introduce SetONet, which treats input functions as unordered sets of location--value pairs and thereby accommodates variable, incomplete, or irregular observations without rasterization or interpolation onto a fixed grid.
    We compare several permutation-invariant aggregation strategies (mean, sum, and cross-attention pooling) and find attention-based pooling to be broadly robust, particularly under variable-sensor and evaluation-time drop-off settings where naive pooling can degrade substantially.

    \item \textbf{A geometry-aware token--sensor aggregation mechanism for set-based operator learning.}
    We propose SetONet-Key, a variant that introduces a separate position-only key pathway and learnable query tokens into the branch network, aggregating sensor information through a non-normalized, geometry-weighted token--sensor aggregation mechanism. By decoupling the encoding of sensor geometry from sensor values, SetONet-Key achieves the best or near-best accuracy among all tested variants on most benchmarks, with especially pronounced gains on fixed-sensor and point-cloud tasks.

    \item \textbf{A universal approximation result for SetONet-Key.}
    We prove that SetONet-Key is a universal approximator for continuous operators on the classical fixed-sensor operator-learning subclass (\cref{thm:setonet_key_uat}), confirming that the separate key--value pathway design does not restrict approximation capacity.

    \item \textbf{Numerical validation on classical benchmarks and problems with inherently unstructured inputs.}
    We evaluate SetONet on four classical operator learning benchmarks under fixed, variable, and sensor drop-off configurations, as well as on four problems with unstructured point-cloud inputs where standard DeepONet is not directly applicable.
    The latter group includes two new operator-learning benchmarks introduced in this work: a phase-screen diffraction problem whose output is a global interference pattern rather than an additive superposition of individual source contributions, and an optimal transport problem in which the input density is represented by discrete samples rather than grid evaluations.

    \item \textbf{An efficient implementation and open-source code.}
    SetONet preserves DeepONet's lightweight branch--trunk template at comparable training cost, while substantially broadening the class of admissible inputs.
    In parameter-matched fixed-sensor studies, SetONet-Key achieves lower error than the DeepONet baseline.
    SetONet-Key also outperforms VIDON on all benchmarks except Optimal Transport, where it matches within the reported uncertainty, while using substantially fewer parameters.
    A complete open-source implementation of all variants, baselines, and benchmarks is available at \url{https://github.com/st-ep/SetONet}.
    
\end{itemize}

The rest of the paper is organized as follows. Section~\ref{sec:related_work} reviews related work in operator learning and approaches for handling variations in inputs. Section~\ref{sec:setonet_methodology} introduces the SetONet framework and details its architecture. Section~\ref{sec:setonet_key_uat} establishes a universal approximation result for SetONet-Key on the classical fixed-sensor subclass. Section~\ref{sec:implementation} discusses implementation details. Section~\ref{sec:results} presents experimental results on benchmark PDE problems, comparing SetONet with standard DeepONet under different input sampling conditions, as well as generalizations to problems not suitable for DeepONet. Finally, Section~\ref{sec:conclusion} summarizes our findings and outlines directions for future work.

\begin{figure}[!htbp]
    \centering
    \includegraphics[width=\textwidth]{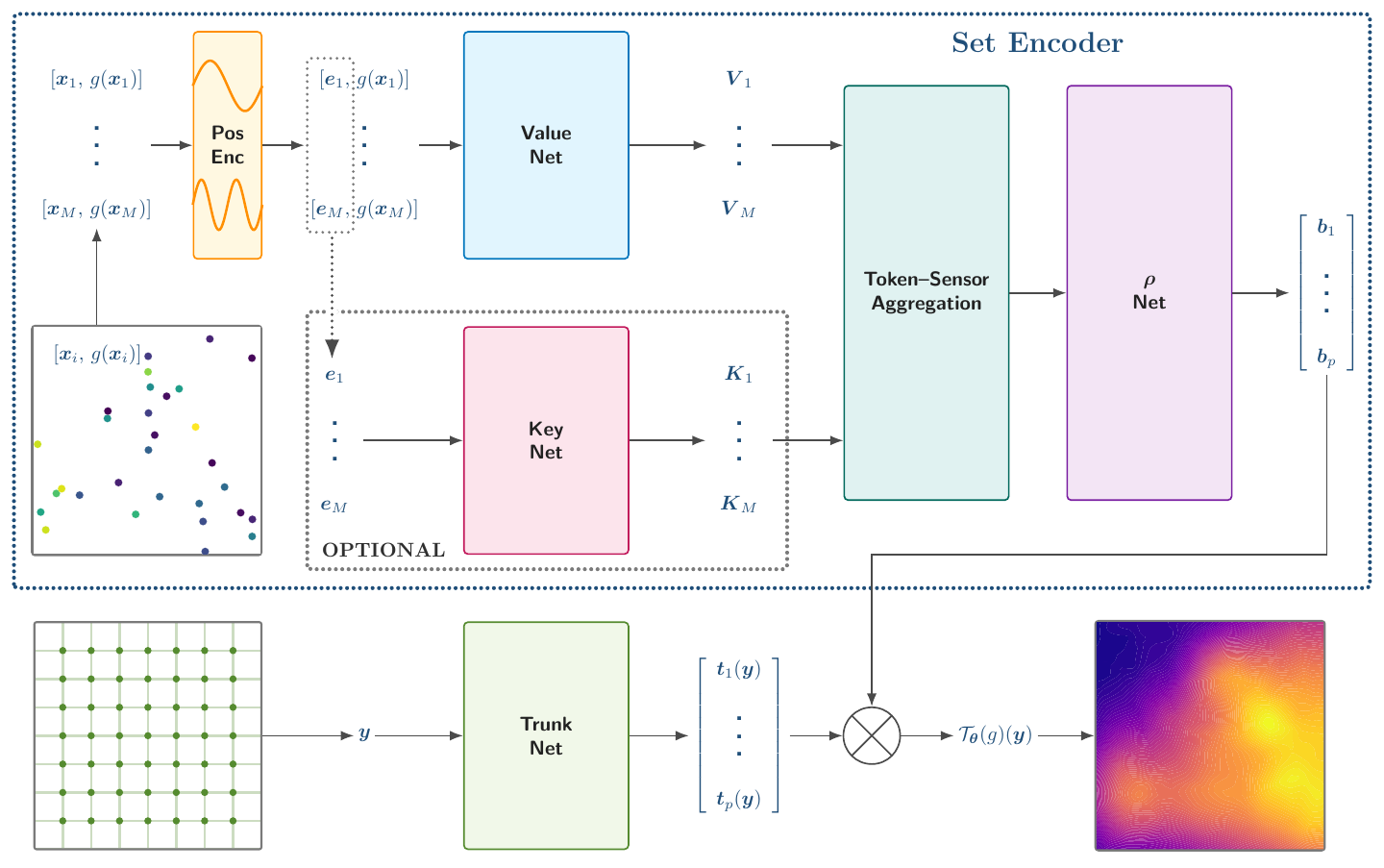}
    \caption{Schematic of the proposed SetONet architecture. The set encoder (top)
    processes the input function as an unordered set of $M$ location--value pairs
    $\{(\boldsymbol{x}_i, g(\boldsymbol{x}_i))\}_{i=1}^M$. Each sensor location
    is mapped to a positional embedding
    $\boldsymbol{e}_i = \mathrm{PE}(\boldsymbol{x}_i)$, and a Value Net
    $v_{\boldsymbol{\theta}}$ produces sensor-level embeddings $\boldsymbol{V}_i$
    (see \cref{sec:setonet_architecture} for variant-specific details). These are
    combined via permutation-invariant token--sensor aggregation, and a readout
    network $\rho$ produces basis coefficients
    $\boldsymbol{b} = [\boldsymbol{b}_1, \dots, \boldsymbol{b}_p]$. The trunk
    network (bottom) maps each query location $\boldsymbol{y}$ to basis vectors
    $\boldsymbol{t}(\boldsymbol{y}) = [\boldsymbol{t}_1(\boldsymbol{y}), \dots,
    \boldsymbol{t}_p(\boldsymbol{y})]$, and the output
    $\mathcal{T}_{\boldsymbol{\theta}}(g)(\boldsymbol{y})$ is computed via
    \cref{eq:setonet_output_simplified}. In the SetONet-Key variant, an optional
    position-only Key Net $k_{\boldsymbol{\theta}}$ maps $\boldsymbol{e}_i$ to key
    features $\boldsymbol{K}_i$, decoupling sensor geometry from sensor values in
    the aggregation step.}
\label{fig:setonet_architecture} %
\end{figure}

\section{Related Work}
\label{sec:related_work}



Classic neural operator architectures such as DeepONet~\cite{lu2019deeponet,lu2021learning} and FNO~\cite{li2021fourier,kovachki2021universal} assume inputs are observed on fixed discretizations of the domain, motivating recent work that aims to relax this constraint.

A direct approach involves designing architectures specifically for variable inputs of different sizes. 
The Variable-Input Deep Operator Network (VIDON)~\cite{prasthofer2022variable} explicitly addresses this challenge. 
VIDON processes sets of input coordinate-value pairs $\{ (\boldsymbol{x_i}, g(\boldsymbol{x_i})) \}_{i=1}^m$, where the number of points $m$ can vary. 
It employs separate MLPs to encode coordinates and values, combines them, and then uses a multi-head attention-like mechanism inspired by Transformers to produce a fixed-size representation of the input function, irrespective of $m$. 
The resulting architecture is permutation invariant, with approximation bounds established under uniform sensor sampling~\cite{prasthofer2022variable}. We note, however, that these bounds are asymptotic in the sensor count and hold in expectation, whereas the classical DeepONet result provides deterministic guarantees at a fixed discretization.
Empirical results show its robustness across various irregular sampling scenarios, including missing data and random sensor placements, at the cost of increased computational overhead.

Similar attention-based methods have been developed in recent years.
Integrating attention into the DeepONet branch network allows the model to dynamically weight input sensor readings based on their importance or location~\cite{zhu2024fast}. 
This dynamic weighting can adapt to varying numbers and positions of sensors. Specific implementations include using attention layers within the Branch Network for tasks like nonlinear fiber transmission simulation~\cite{zhu2024fast}. 
Furthermore, concepts like orthogonal attention have been proposed to build neural operators robust to irregular geometries~\cite{xiao2023improved}. 
Architectures like the Latent Neural Operator utilize a physics-informed cross-attention mechanism, enabling operation in a latent space and prediction at arbitrary locations, implicitly handling variable inputs~\cite{wang2024latent}. 
Other attention-based operators like ONO~\cite{xiao2023improved}, PIANO~\cite{li2024enhancing}, CoDA-NO~\cite{rahman2024pretraining}, and Kernel-Coupled Attention~\cite{kissas2022learning} further underscore the utility of attention for flexible input handling in operator learning.

In a more recent work, the Resolution Independent Neural Operator (RINO)~\cite{bahmani2024resolutionindependentneuraloperator} proposes an alternative approach that also handles input functions sampled at different coordinates. Instead of directly using function values, a set of neural network-parameterized basis functions is learned to approximate the input function space. The DeepONet architecture then takes the corresponding projection coefficients, obtained via implicit neural representations as input, enabling it to process functions with varying resolutions. Similar ideas are also explored in \cite{ingebrand2025basis} with a greater deviation from the standard DeepONet framework.

For certain classes of functions—such as density fields that may not admit closed-form expressions—additional preprocessing is often required to make the data compatible with the DeepONet framework. For example, \cite{cheng2021usingneuralnetworkssolve} first casts the input distribution as images (i.e., rasterize them onto a fixed grid) and then processes them with a convolutional neural network (CNN). Similar CNN-based methods are also explored in \cite{shan2020study}.

Graph Neural Networks (GNNs) offer a natural framework for processing data defined on irregular domains or meshes, making them suitable for variable input sampling~\cite{li2020neural}. 
Input samples can be represented as nodes in a graph, with features corresponding to function values and edges representing spatial relationships (e.g., proximity). 
GNNs integrated into the DeepONet branch network can learn features by aggregating information from neighboring sample points or mesh nodes via message passing, inherently handling variations in the number and arrangement of points. Hybrid models combining GNNs and DeepONet have demonstrated success. 
For example, GraphDeepONet uses a GNN for temporal evolution on irregular grids while DeepONet captures spatial characteristics~\cite{li2020neural, cho2024learning}. 
MeshGraphNet has shown promise for flow fields on irregular grids~\cite{fortunato2022multiscale,peyvan2025fusion}, and Point-DeepONet integrates PointNet (a GNN for point clouds) to process 3D geometries directly~\cite{cho2024learning}. 
These works establish GNNs as a viable method for equipping DeepONet with variable and irregular input capabilities.

Finally, alternative architectural modifications have been explored. 
Fusion DeepONet~\cite{peyvan2025fusion} employs neural field concepts and modifies the branch-trunk interaction to improve generalization on irregular grids, outperforming standard DeepONet in such settings. 
Its ability to handle arbitrary grids suggests flexibility with respect to variable input locations. 
As mentioned earlier, Latent Neural Operators~\cite{wang2024latent} operate in a learned latent space, decoupling the operator learning from the physical input grid and allowing decoding at arbitrary points, which inherently supports variable input structures. 
These examples show that fundamental changes to the DeepONet structure or operating paradigm can also yield architectures capable of handling non-standard input sampling.

Collectively, these approaches demonstrate significant progress in addressing the mismatch between operator learning models and irregular observation of input functions. At the same time, many of these methods rely on substantially more complex architectures involving stacked self-attention layers and graph message passing, or require multi-stage training, thereby increasing parameter counts and computational cost relative to the end-to-end trained DeepONet- or FNO-style models which are often more favorable in practice.

\section{SetONet for PDE Operator Learning}
\label{sec:setonet_methodology}
As discussed in \cref{sec:intro}, the standard DeepONet branch network~\cite{lu2019deeponet} requires the input function $g\in\mathcal{G}$ to be evaluated at a fixed set of sensor locations, precluding direct application to problems with variable, missing, or inherently unstructured observations.
We introduce \emph{Set Operator Network} (SetONet), which retains the DeepONet operator-learning template but replaces the fixed-sensor branch with a permutation-invariant set encoder extended from Deep Sets~\cite{zaheer2018deepsets}.
SetONet represents each input function as an unordered set of location--value pairs
$\big\{(\boldsymbol{x}_i, g(\boldsymbol{x}_i))\big\}_{i=1}^M$,
where $M$ may vary across different input functions, and processes this set through an encoder that is both \emph{spatially aware} (it operates on coordinates and values jointly) and \emph{permutation invariant}. Formally, for any set $X = \{\boldsymbol{x}_1, \boldsymbol{x}_2, \dots, \boldsymbol{x}_M\} \subset \mathbb{R}^d$ and any permutation $\pi$ of indices $ \{1, \dots, M\} $, a function $ f: 2^{\mathbb{R}^d} \rightarrow \mathbb{R} $ is permutation invariant if:
\begin{equation}
    f(\{\boldsymbol{x}_1, \boldsymbol{x}_2, \dots, \boldsymbol{x}_M\}) = f(\{\boldsymbol{x}_{\pi(1)}, \boldsymbol{x}_{\pi(2)}, \dots, \boldsymbol{x}_{\pi(M)}\}).
\end{equation}
This property is crucial for unstructured data presented as set-valued inputs, and our architecture achieves it by applying shared transformations independently to each sensor, followed by permutation-invariant aggregation and a final readout to branch coefficients. The resulting set encoder serves as the branch network, while the output space $\mathcal{H}$ is approximated, as in DeepONet, by learned basis functions produced by a trunk network and evaluated at arbitrary query locations $\boldsymbol{y}$. The specific architectural variants and their design rationale are presented in \cref{sec:setonet_architecture}.

\subsection{SetONet Architecture}
\label{sec:setonet_architecture}

SetONet follows a branch--trunk decomposition: the branch network encodes the input function, while the trunk network maps query locations to basis functions; see \cref{fig:setonet_architecture}. We first introduce \emph{SetONet-Key}, a structured branch variant that employs token--sensor aggregation with a separate position-only key pathway, treating sensor locations and sensor values through distinct networks. This formulation naturally motivates the question of whether separate pathways are in fact necessary; we therefore also consider a simplified \emph{SetONet} variant that uses a shared network applied to joint position--value features, while sharing the same trunk and output synthesis rule. We note that \cite{chiu2024deeposets} proposed a related shared-network set-encoding architecture for in-context metalearning; the simplified SetONet variant differs in that it embeds the set encoder directly within the DeepONet branch network for PDE operator learning.

\subsubsection{Branch Network}
\label{sec:branch_net}

The branch network is a permutation-invariant set encoder that maps an unordered sensor set
$\{(\boldsymbol{x}_i,\boldsymbol{u}_i)\}_{i=1}^M$, with
$\boldsymbol{x}_i \in \mathbb{R}^{d_x}$ and
$\boldsymbol{u}_i = g(\boldsymbol{x}_i) \in \mathbb{R}^{d_u}$,
to $p$ branch-coefficient vectors
$\{\boldsymbol{b}_k\}_{k=1}^{p}$, where
$\boldsymbol{b}_k \in \mathbb{R}^{d_{\text{out}}}$.
These coefficient vectors are then paired with the learned trunk basis vectors via the synthesis rule in \cref{eq:setonet_output_simplified}. Both variants aggregate sensor information through learned token--sensor interactions. SetONet-Key uses a separate position-only key pathway together with a value pathway to obtain a geometry-aware aggregation mechanism; the simplified variant instead applies attention pooling directly to joint position--value embeddings. See \cref{fig:setonet_architecture}.

\paragraph{Input Processing and Positional Encoding (PE)}
To incorporate spatial information, each sensor location $\boldsymbol{x}_i$ is mapped to a Fourier-feature positional embedding~\cite{rahimi2007random,tancik2020fourier}, yielding
$\boldsymbol{e}_i = \mathrm{PE}(\boldsymbol{x}_i) \in \mathbb{R}^{d_{\mathrm{PE}}}$.
This encoding captures multiscale spatial structure and is used throughout the branch. The two branch variants differ in how the encoded locations $\boldsymbol{e}_i$ and sensor values $\boldsymbol{u}_i$ are mapped into sensor-level features, and in how those features are aggregated by the query tokens.

\paragraph{Common token--sensor aggregation form}
The pooling step lies at the heart of the set encoder, where sensor-wise embeddings \(\boldsymbol V_i\) are combined into a set-level representation \(\boldsymbol V_{\mathrm{agg}}\), with common approaches initially proposed in~\cite{zaheer2018deepsets} including
\[
\boldsymbol{V}_{\mathrm{agg}} = \textstyle\sum_{i=1}^{M} \boldsymbol{V}_i, \quad \text{or} \quad
\boldsymbol{V}_{\mathrm{agg}} = \frac{1}{M}\textstyle\sum_{i=1}^{M} \boldsymbol{V}_i;\qquad
\]
Although sum and mean pooling are computationally simple, they can compress the sensor set too aggressively for PDE operator learning, where high accuracy often requires more expressive token--sensor interactions. To this end a more general aggregation method is needed. 
To express the token-based aggregation rules of SetONet-Key and simplified SetONet in a unified form, we introduce the following token--sensor aggregation operator. Let
$\mathbf{Q} \in \mathbb{R}^{n_{\mathrm{pool}} \times d_k}$ denote a matrix of learnable query tokens, and let
$\mathbf{K} \in \mathbb{R}^{M \times d_k}$ and
$\mathbf{V} \in \mathbb{R}^{M \times d_v}$ denote sensor-side key and value features, respectively. We define the score matrix
\[
\mathbf{S} = \frac{\mathbf{Q}\mathbf{K}^\top}{\sqrt{d_k}}
\in \mathbb{R}^{n_{\mathrm{pool}} \times M},
\]
and the token--sensor aggregation operator
\[
\operatorname{TSA}_{\mathcal{A}}(\mathbf{Q},\mathbf{K},\mathbf{V})
=
\mathcal{A}(\mathbf{S})\,\mathbf{V}
\in \mathbb{R}^{n_{\mathrm{pool}} \times d_v},
\]
where $\mathcal{A}(\mathbf{S}) \in \mathbb{R}^{n_{\mathrm{pool}} \times M}$ is a row-wise token--sensor mixing matrix. 
Single-head cross-attention is recovered when $\mathcal{A}(\mathbf{S})$ is given by row-wise softmax.
 More generally, $\mathcal{A}$ may be viewed as an abstract row-wise map from token--sensor scores to mixing coefficients, provided it remains equivariant with respect to permutations of the sensor index. This allows the same notation to cover other token--sensor aggregation rules while preserving permutation invariance.

 \paragraph{(i) SetONet-Key: Key Net and token--sensor aggregation}
SetONet-Key introduces separate sensor-side key and value pathways. A position-only Key Net
$k_{\boldsymbol{\theta}}:\mathbb{R}^{d_{\mathrm{PE}}}\to\mathbb{R}^{d_k}$ maps the encoded sensor locations to key features
$\boldsymbol{K}_i = k_{\boldsymbol{\theta}}(\boldsymbol{e}_i)\in\mathbb{R}^{d_k}$,
while a Value Net
$v_{\boldsymbol{\theta}}:\mathbb{R}^{d_u}\to\mathbb{R}^{d_v}$ maps the sensor values to value features
$\boldsymbol{V}_i = v_{\boldsymbol{\theta}}(\boldsymbol{u}_i)\in\mathbb{R}^{d_v}$, for
$i=1,\dots,M$.
Let
$\mathbf{K}=[\boldsymbol{K}_1, \dots, \boldsymbol{K}_M]^\top \in\mathbb{R}^{M\times d_k}$ and
$\mathbf{V}=[\boldsymbol{V}_1, \dots, \boldsymbol{V}_M]^\top \in\mathbb{R}^{M\times d_v}$.

SetONet-Key is obtained from the common token--sensor aggregation form by taking
$\mathbf{Q}=[\boldsymbol{q}_1;\dots;\boldsymbol{q}_{n_{\mathrm{pool}}}]\in\mathbb{R}^{n_{\mathrm{pool}}\times d_k}$
to be a set of learnable query tokens, together with the position-only keys $\mathbf{K}$ and value features $\mathbf{V}$ defined above. This yields
\[
\mathbf{P}
=
\operatorname{TSA}_{\mathcal{A}_{\mathrm{key}}}(\mathbf{Q},\mathbf{K},\mathbf{V})
\in \mathbb{R}^{n_{\mathrm{pool}}\times d_v},
\]
whose rows define token-wise latent summaries. The main structural property of SetONet-Key is that it decouples token--sensor mixing from the sensor value features through a separate position-only key pathway; the simplified variant introduced below uses the same embeddings for both roles.
Additionally, in SetONet-Key we do not use row-wise softmax normalization, since it can impose aggressive normalization and cause the aggregation to concentrate on only a few value embeddings, as discussed in~\cite{zhai2023stabilizing}.
Instead, we use alternatives such as softplus or tanh as row-wise transformations of the score matrix. This non-normalized setting is also the one covered by the universal approximation result in \cref{sec:setonet_key_uat}. This allows the geometry-dependent mixing to be specified independently of the value pathway. We apply linear scaling to ensure that the aggregated results do not explode as the number of sensors increases.

\paragraph{(ii) Simplified SetONet: Value Net and set pooling}
In the standard SetONet branch, a shared Value Net $v_{\boldsymbol{\theta}}$ is applied independently to each concatenated sensor feature $[\boldsymbol{e}_i,\boldsymbol{u}_i]$, producing
$\boldsymbol{V}_i = v_{\boldsymbol{\theta}}([\boldsymbol{e}_i,\boldsymbol{u}_i]) \in \mathbb{R}^{d_v}$ for
$i=1,\dots,M$.
Let $\mathbf{V}=[\boldsymbol{V}_1, \dots, \boldsymbol{V}_M]^\top \in\mathbb{R}^{M\times d_v}$ denote the matrix of sensor embeddings. Aggregation is then performed over these learned value embeddings, rather than directly over the raw position--value pairs. 

The embeddings are pooled over the sensor index using a symmetric operator. For the attention-based variant, referred to as SetONet (Attention), the encoder is obtained from the common token--sensor aggregation form by taking
$\mathbf{Q}\in\mathbb{R}^{n_{\mathrm{pool}}\times d_v}$ to be a set of learnable pooling tokens, using the same sensor embedding matrix as both keys and values, and choosing $\mathcal{A}$ to be the row-wise softmax. This gives
\[
\mathbf{P}
=
\operatorname{TSA}_{\mathrm{softmax}}(\mathbf{Q},\mathbf{V},\mathbf{V})
\in \mathbb{R}^{n_{\mathrm{pool}}\times d_v},
\qquad
\boldsymbol{V}_{\mathrm{agg}}
=
\operatorname{vec}(\mathbf{P})
\in \mathbb{R}^{n_{\mathrm{pool}} \cdot d_v}.
\]
Here $\operatorname{vec}(\cdot)$ flattens the pooled token matrix into a vector. Thus, the attention-pooling SetONet variant reduces to a standard cross-attention aggregation with learnable weights applied to joint position--value embeddings. We note that in all experiments the SetONet (Attention) variant uses $n_{\mathrm{pool}}=1$, yielding a single pooled token.

\paragraph{Readout to branch coefficients}
In both variants, the aggregated latent features are mapped to the branch coefficients
$\{\boldsymbol{b}_k\}_{k=1}^{p}$, where $\boldsymbol{b}_k \in \mathbb{R}^{d_{\text{out}}}$. Equivalently, the full branch output may be viewed as the concatenated vector $\boldsymbol{b}\in\mathbb{R}^{p d_{\text{out}}}$.
For SetONet-Key, since flattening the aggregation results $\mathbf{P}$ can lead to a significant increase in size and computation for the readout, we opt for an efficient parameterization that balances accuracy and cost. We define $\rho_{\mathrm{tok}}:\mathbb{R}^{d_v}\to\mathbb{R}^{d_{\text{out}}}$ to be a small-sized MLP applied row-wise to the token summaries, and let $W:\mathbb{R}^{n_{\mathrm{pool}}}\to\mathbb{R}^{p}$ denote a linear projection applied across the token dimension. This yields
\[
\boldsymbol{b}=\operatorname{vec}\!\bigl(W\rho_{\mathrm{tok}}(\mathbf{P})\bigr)\in\mathbb{R}^{p d_{\text{out}}},
\]
where $\rho_{\mathrm{tok}}$ is applied row-wise and $\boldsymbol{b}$ is reshaped into $\{\boldsymbol{b}_k\}_{k=1}^{p}$ with $\boldsymbol{b}_k \in \mathbb{R}^{d_{\text{out}}}$.
For SetONet (Attention), the set-level representation $\boldsymbol{V}_{\mathrm{agg}}$ is mapped by a readout MLP
\[
\rho:\mathbb{R}^{n_{\mathrm{pool}}\cdot d_v}\to\mathbb{R}^{p d_{\text{out}}},
\]
whose output is reshaped into $\{\boldsymbol{b}_k\}_{k=1}^{p}$.
In either case, the branch outputs the coefficient vectors $\{\boldsymbol{b}_k\}_{k=1}^{p}$, which are then combined with the trunk basis functions through the common synthesis rule in \cref{eq:setonet_output_simplified}.

\paragraph{Variable-cardinality behavior}
A direct consequence of the set-based branch formulation is that the branch map is defined for arbitrary sensor cardinality $M$: changing $M$ changes only the number of elements in the input set, not the architecture of the encoder. Figure~\ref{fig:sensor_ablation} illustrates this property on the Darcy 1D benchmark. All models are trained at $M=300$ and evaluated at different sensor counts without retraining. The stable behavior of the SetONet variants, especially SetONet-Key, indicates that the learned branch representation is set-based and not tied to a single nominal discretization, but remains meaningful as the sampling density changes. The sum-pooling variant provides a useful contrast: because its aggregation is unnormalized, the scale of the pooled representation changes directly with the sensor cardinality, making it substantially more sensitive to shifts in $M$ away from the training value.

\begin{table}[htbp]
\centering
\caption{Number of trainable parameters comparing SetONet variants, DeepONet, and VIDON on two representative benchmarks. `---' denotes not applicable. }
\label{tab:param_counts}
\fontsize{10pt}{14.4pt}\selectfont
\setlength{\tabcolsep}{5pt}
\renewcommand{\arraystretch}{1.12}
\begin{tabular}{@{}lccccccc@{}}
\toprule
\multirow{2}{*}{\textbf{Benchmark}} 
& \multirow{2}{*}{\textbf{SetONet-Key}}
& \multicolumn{3}{c}{\textbf{SetONet}} 
& \multirow{2}{*}{\textbf{DeepONet}} 
& \multirow{2}{*}{\textbf{VIDON}} \\
\cmidrule(lr){3-5}
& & \textbf{Attention} & \textbf{Mean} & \textbf{Sum} &  &  \\
\midrule
Darcy 1D     & 207{,}842 & 255{,}021 & 250{,}765 & 250{,}765 & 281{,}792 & 695{,}893 \\
Diffraction  & 301{,}956 & 367{,}810 & 363{,}554 & 363{,}554 & \textemdash & 811{,}622 \\
\bottomrule
\end{tabular}
\end{table}

\begin{figure}[ht]
    \centering
    \includegraphics[width=1.0\textwidth]{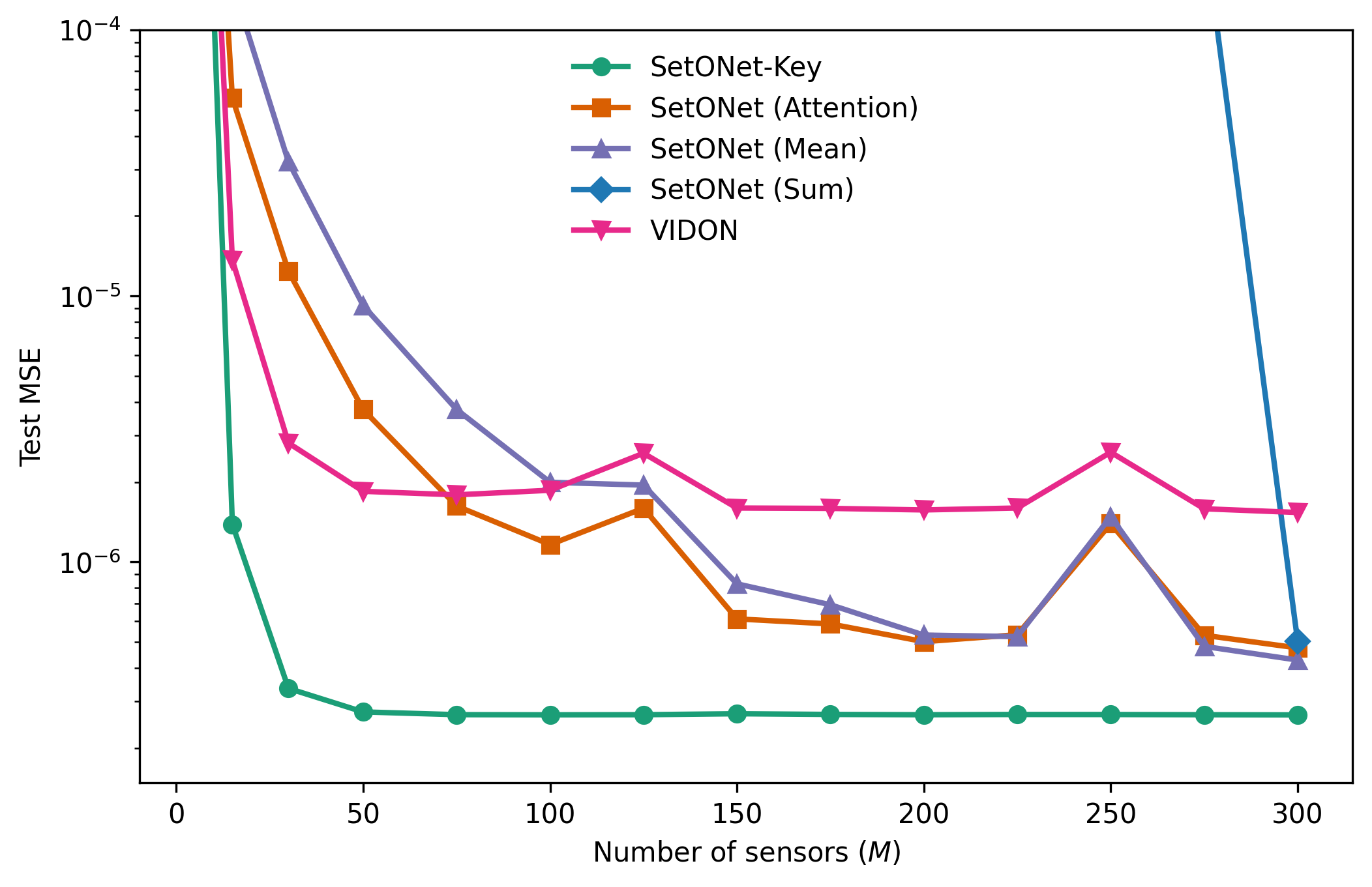}
    \caption{Test MSE as a function of the number of equally spaced sensors $M$ on the Darcy 1D benchmark. All models are trained with $M{=}300$ sensors and evaluated at varying sensor counts without retraining. The stable behavior indicates that the learned representation of a function is set-based and remains consistent when the sample size changes. We also note that the sum pooling from~\cite{zaheer2018deepsets} explodes as the number of sensors changes, highlighting the need for our improved set encoder designs.}
    \label{fig:sensor_ablation}
\end{figure}

\subsubsection{Trunk Network}

The trunk is a standard MLP that takes a query location $\boldsymbol{y}\in\mathbb{R}^{d_y}$ and returns $p$ basis vectors
\[
\boldsymbol{t}(\boldsymbol{y})=\big[\boldsymbol{t}_1(\boldsymbol{y}),\dots,\boldsymbol{t}_p(\boldsymbol{y})\big],\qquad
\boldsymbol{t}_k(\boldsymbol{y})\in\mathbb{R}^{d_{\text{out}}}.
\]
Its role mirrors the trunk in DeepONet; architectural hyperparameters (depth, width, activations) are task-dependent.

\subsubsection{Output Computation}
The final prediction at a query point $\boldsymbol{y}$ is obtained by combining the branch coefficients $\boldsymbol{b}$ with the trunk basis functions $\boldsymbol{t}(\boldsymbol{y})$, following the standard DeepONet formulation. Specifically,
\begin{equation}
   \mathcal {T}_{\boldsymbol{\theta}}(g)(\boldsymbol{y}) 
    = \sum_{k=1}^{p} \boldsymbol{b}_k \odot \boldsymbol{t}_k(\boldsymbol{y}) + \boldsymbol{b}_0,
    \label{eq:setonet_output_simplified}
\end{equation}
where $\boldsymbol{b}_k, \boldsymbol{t}_k(\boldsymbol{y}) \in \mathbb{R}^{d_{\text{out}}}$, $\odot$ denotes element-wise multiplication, and $\boldsymbol{b}_0 \in \mathbb{R}^{d_{\text{out}}}$ is a learnable bias. This preserves the DeepONet synthesis mechanism while leveraging the permutation-invariant, spatially aware branch representation.

\section{Universal approximation of SetONet-Key}
\label{sec:setonet_key_uat}

We prove that SetONet-Key is a universal approximator on the classical fixed-sensor operator-learning interface.
The trunk and synthesis rule are identical to DeepONet, so the proof reduces to showing that the SetONet-Key branch can replicate the input--output map of a DeepONet branch to arbitrary accuracy.
A DeepONet branch computes $\sigma(\sum_j \xi_{ij} g(\boldsymbol{x}_j) + \theta_i)$ in each hidden unit, mixing all sensor values inside the nonlinearity.
In SetONet-Key, the Value Net and readout $\rho_{\mathrm{tok}}$ each act on one input at a time, so this mixing cannot happen inside $\sigma$.
It happens instead in the weighted sum $\sum_j \mathcal{A}_{kj} \boldsymbol{V}_j$, the one operation that touches all sensors at once.
Choosing the mixing weights to match $\xi_{ij}$ makes each token compute $\sum_j \xi_{ij} g(\boldsymbol{x}_j)$, after which the readout applies $\sigma(\cdot + \theta_i)$.

The difficulty is that $\rho_{\mathrm{tok}}$ is the same function for every token, yet each token requires a different bias $\theta_i$.
To distinguish tokens, the Value Net encodes a constant $\lambda$ alongside each sensor reading, producing $\boldsymbol{V}_j = (\lambda g(\boldsymbol{x}_j), \lambda)$.
After a small perturbation of the weights, aggregation deposits a unique numerical tag in each token's second channel.
The readout uses polynomial interpolation on this tag to select the correct bias, recovering the perturbed DeepONet branch output exactly.

Throughout this section, $\|\cdot\|_\infty$ denotes the $\ell^\infty$-norm on $\mathbb R^{d_{\text{out}}}$, and we write $\boldsymbol b_k=\bigl(W\rho_{\mathrm{tok}}(\mathbf P)\bigr)_{k,:}^\top\in\mathbb R^{d_{\text{out}}}$ for the $k$-th branch-coefficient block as in \cref{sec:branch_net}.
For full rigor, we state the result in the scalar-input case $d_u=1$, which is the setting of the classical DeepONet theorem in \cite{lu2021learning}.

\begin{theorem}[Universal approximation of SetONet-Key on the fixed-sensor subclass]
\label{thm:setonet_key_uat}
Assume \(d_u=1\). Let \(\Omega_x\subset\mathbb R^{d_x}\) and \(\Omega_y\subset\mathbb R^{d_y}\) be compact, let \(\mathcal G\subset C(\Omega_x)\) be compact, and let
\[
\mathcal T:\mathcal G\to C(\Omega_y;\mathbb R^{d_{\text{out}}})
\]
be continuous, where \(C(\Omega_x)\) is equipped with the sup norm and \(C(\Omega_y;\mathbb R^{d_{\text{out}}})\) with the sup-\(\ell^\infty\) norm. Consider SetONet-Key as defined in \cref{sec:branch_net,eq:setonet_output_simplified}, and assume the following.
\begin{enumerate}
    \item The positional encoding \(\mathrm{PE}:\Omega_x\to\mathbb R^{d_{\mathrm{PE}}}\) is continuous and injective.

    \item For each fixed sensor count \(M\), the token--sensor mixing rule has the form
    \[
    \mathcal A_{\mathrm{key}}(\mathbf S)=\alpha_M\,a_{\mathrm{mix}}(\mathbf S),
    \]
    where \(\alpha_M>0\) is a deterministic scalar depending only on \(M\), and \(a_{\mathrm{mix}}:\mathbb R\to\mathbb R\) is applied entrywise. Assume that \(a_{\mathrm{mix}}\) is continuous and one-to-one on some interval \(I\subset\mathbb R\), and that
    \[
    0\in \operatorname{int}\bigl(a_{\mathrm{mix}}(I)\bigr).
    \]
    This covers, for example, \(\tanh\).

    \item The Key Net \(k_{\boldsymbol\theta}\), the Value Net \(v_{\boldsymbol\theta}\), the row-wise readout \(\rho_{\mathrm{tok}}\), and the trunk network are MLPs with the same continuous non-polynomial activation \(\sigma\).
\end{enumerate}

Then, for every \(\varepsilon>0\), there exist integers \(M,p,n_{\mathrm{pool}},d_k,d_v\), a fixed sensor set \(\{\boldsymbol x_j\}_{j=1}^{M}\subset\Omega_x\), and SetONet-Key parameters \(\boldsymbol\theta\) such that, when each \(g\in\mathcal G\) is presented to the branch as
\[
\{(\boldsymbol x_j,g(\boldsymbol x_j))\}_{j=1}^{M},
\]
the resulting SetONet-Key operator \(\mathcal T_{\boldsymbol\theta}\) satisfies
\[
\sup_{g\in\mathcal G}\sup_{\boldsymbol y\in\Omega_y}
\bigl\|
\mathcal T(g)(\boldsymbol y)-\mathcal T_{\boldsymbol\theta}(g)(\boldsymbol y)
\bigr\|_\infty
<
\varepsilon.
\]
In particular, the general SetONet-Key family is a universal approximator on the classical fixed-sensor operator-learning subclass.
\end{theorem}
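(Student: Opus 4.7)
The plan is to reduce the theorem to the classical DeepONet universal approximation result of~\cite{lu2021learning,chen1995universal} by showing that any DeepONet approximating \(\mathcal T\) to within \(\varepsilon/2\) can be uniformly reproduced by a SetONet-Key to within an additional \(\varepsilon/2\), so the composition yields the claimed bound. The SetONet-Key trunk is an MLP with activation \(\sigma\) and thus approximates the DeepONet trunk on the compact set \(\Omega_y\) by standard MLP universal approximation. The nontrivial work is to make the SetONet-Key branch reproduce, uniformly over \(g\in\mathcal G\), the DeepONet scalar branch outputs
\[
b_k(g)=\sigma\Bigl(\sum_{j=1}^{M}\xi_{kj}g(\boldsymbol x_j)+\theta_k\Bigr),\qquad k=1,\dots,p.
\]
The vector-valued case \(d_{\text{out}}>1\) reduces to this by using \(n_{\mathrm{pool}}=p\cdot d_{\text{out}}\) tokens (one per \((k,c)\) pair) and a block-sparse \(W\) that routes each token into the correct coordinate of \(\boldsymbol b_k\in\mathbb R^{d_{\text{out}}}\); the remainder of the plan treats the scalar case.

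First I would fix \(n_{\mathrm{pool}}=p\), \(W=I_p\), \(d_v=2\), and \(d_k=M\). Choose \(\lambda>0\) large enough that every target weight \(w_{kj}:=\xi_{kj}/(\lambda\alpha_M)\) lies inside \(a_{\mathrm{mix}}(I)\), whose interior contains \(0\) by Assumption~2, and set \(s^{*}_{kj}:=a_{\mathrm{mix}}^{-1}(w_{kj})\in I\); this inverse is continuous since a continuous injective map on an interval is strictly monotone. The Key Net is asked to map the \(M\) distinct positional embeddings \(\mathrm{PE}(\boldsymbol x_j)\) approximately onto the scaled standard basis \(\sqrt{d_k}\,\boldsymbol e_j\in\mathbb R^{M}\), which is possible to arbitrary precision by MLP UAT because \(\mathrm{PE}\) is continuous and injective and the sensors are distinct. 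Taking the \(k\)-th learnable query to be \(\boldsymbol q_k=(s^{*}_{k,1},\dots,s^{*}_{k,M})\) yields \(\mathbf S_{kj}\approx s^{*}_{kj}\) and hence \(\mathcal A_{\mathrm{key}}(\mathbf S)_{kj}\approx\xi_{kj}/\lambda\). In parallel, the Value Net is trained to approximate the affine map \(u\mapsto(\lambda u,\lambda)\) on the compact range of admissible sensor values, so that the aggregated token \(\boldsymbol P_k\in\mathbb R^{2}\) has first coordinate approximately \(\sum_{j=1}^{M}\xi_{kj}g(\boldsymbol x_j)\) (the pre-activation argument of \(b_k\)) and second coordinate approximately \(\tau_k:=\sum_{j=1}^{M}\xi_{kj}\), a \(g\)-independent scalar tag.

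The main obstacle is that \(\rho_{\mathrm{tok}}\) is a single MLP shared across tokens yet must output a token-specific affine shift \(\theta_k\) inside \(\sigma\). The tag \(\tau_k\) resolves this provided the \(\tau_k\) are pairwise distinct, which is enforced by an arbitrarily small preliminary perturbation of the DeepONet weights \(\xi_{kj}\): the set of DeepONets approximating \(\mathcal T\) to within \(\varepsilon/4\) is open in parameter space, so a generic perturbation makes \(\{\tau_k\}_{k=1}^{p}\) pairwise distinct while preserving the approximation quality. One then picks any continuous interpolant \(\Theta:[\tau_{\min},\tau_{\max}]\to\mathbb R\) with \(\Theta(\tau_k)=\theta_k\) and targets the bivariate map \(F(s,\tau):=\sigma\bigl(s+\Theta(\tau)\bigr)\) with \(\rho_{\mathrm{tok}}\); MLP UAT on the compact domain containing all admissible token features delivers such an approximator. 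The remaining technical work is to chain the Key Net, Value Net, and readout tolerances via the uniform continuity of all intermediate maps on compact sets so that the compounded branch error, combined with the \(\varepsilon/2\) DeepONet slack and the trunk synthesis step, stays below \(\varepsilon\); the quantitative driver is the minimum tag separation \(\min_{k\ne k'}|\tau_k-\tau_{k'}|\), which sets how sharply \(\rho_{\mathrm{tok}}\) must interpolate the biases and thus drives the tolerance budget for each subnetwork.
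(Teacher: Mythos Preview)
Your proposal is correct and follows essentially the same construction as the paper: Key Net sending the distinct positional embeddings to (scaled) standard basis vectors, queries set via $a_{\mathrm{mix}}^{-1}$ so the mixing weights recover $\xi_{kj}/\lambda$, Value Net $u\mapsto(\lambda u,\lambda)$ so each token carries $(\text{preactivation},\text{tag})$, a generic perturbation of the DeepONet weights to make the tags pairwise distinct, and a shared readout that interpolates the biases from the tag before applying $\sigma$. The only cosmetic differences are that the paper keeps the DeepONet hidden layer explicit (so $n_{\mathrm{pool}}=pn$ and the $c_i^k$ weights live in $W$) rather than flattening to single-$\sigma$ branch units with $W=I$, and that in your vector-output sketch the coordinate routing must be done by $\rho_{\mathrm{tok}}$ via the tag rather than by a block-sparse $W$, since $W$ acts on the token dimension only.
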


\begin{proof}
Fix \(\varepsilon>0\). For notational convenience, let \(\widehat{\boldsymbol e}_r\) denote the \(r\)-th canonical basis vector in the relevant Euclidean space.

\paragraph{Step 1 (Reduction to stacked DeepONet)}
For each output component \(\ell=1,\dots,d_{\text{out}}\), define
\[
\mathcal T_\ell(g)(\boldsymbol y):=
\bigl(\mathcal T(g)(\boldsymbol y)\bigr)_\ell .
\]
Since the coordinate projection is continuous, each \(\mathcal T_\ell:\mathcal G\to C(\Omega_y)\) is continuous. By the scalar DeepONet theorem of \cite{lu2021learning}, for each \(\ell\) there exist integers \(m_\ell,p_\ell,n_\ell\), a sensor set
\(
X_\ell=\{\boldsymbol x_j^{(\ell)}\}_{j=1}^{m_\ell}\subset\Omega_x,
\)
and coefficients
\[
c_{\ell,i}^r,\;\xi_{\ell,ij}^r,\;\theta_{\ell,i}^r,\;\zeta_{\ell,r}\in\mathbb R,
\qquad
\boldsymbol w_{\ell,r}\in\mathbb R^{d_y},
\]
such that
\[
\mathcal D_\ell(g)(\boldsymbol y)
=
\sum_{r=1}^{p_\ell}
\left(
\sum_{i=1}^{n_\ell}
c_{\ell,i}^r\,
\sigma\!\Bigl(
\sum_{j=1}^{m_\ell}\xi_{\ell,ij}^r\,g(\boldsymbol x_j^{(\ell)})+\theta_{\ell,i}^r
\Bigr)
\right)
\sigma(\boldsymbol w_{\ell,r}\cdot\boldsymbol y+\zeta_{\ell,r})
\]
satisfies
\[
\sup_{g\in\mathcal G}\sup_{\boldsymbol y\in\Omega_y}
\bigl|
\mathcal T_\ell(g)(\boldsymbol y)-\mathcal D_\ell(g)(\boldsymbol y)
\bigr|
<
\frac{\varepsilon}{4}.
\]

Let
\[
X:=\bigcup_{\ell=1}^{d_{\text{out}}}X_\ell=\{\boldsymbol x_j\}_{j=1}^{m},
\qquad
n:=\max_{1\le \ell\le d_{\text{out}}} n_\ell,
\qquad
p:=\sum_{\ell=1}^{d_{\text{out}}}p_\ell .
\]
Extend each scalar approximation to the common sensor set \(X\) by setting the coefficients corresponding to sensors outside \(X_\ell\) equal to zero, and pad each hidden sum with zero coefficients up to width \(n\). Reindex all scalar trunk factors by a single global index \(k=1,\dots,p\), and let \(\ell(k)\in\{1,\dots,d_{\text{out}}\}\) denote the output component associated with index \(k\). Then there exist coefficients
\[
c_i^k,\;\xi_{ij}^k,\;\theta_i^k,\;\zeta_k\in\mathbb R,
\qquad
\boldsymbol w_k\in\mathbb R^{d_y},
\]
such that
\[
\beta_k(g)
:=
\sum_{i=1}^{n}
c_i^k\,
\sigma\!\Bigl(
\sum_{j=1}^{m}\xi_{ij}^k\,g(\boldsymbol x_j)+\theta_i^k
\Bigr),
\qquad
\boldsymbol\tau_k(\boldsymbol y)
:=
\sigma(\boldsymbol w_k\cdot\boldsymbol y+\zeta_k)\,\widehat{\boldsymbol e}_{\ell(k)},
\]
and
\begin{equation}
\sup_{g\in\mathcal G}\sup_{\boldsymbol y\in\Omega_y}
\left\|
\mathcal T(g)(\boldsymbol y)
-
\sum_{k=1}^{p}\beta_k(g)\,\boldsymbol\tau_k(\boldsymbol y)
\right\|_\infty
<
\frac{\varepsilon}{4}.
\label{eq:setonet_key_uat_deeponet}
\end{equation}
The use of \(\|\cdot\|_\infty\) is precisely what makes this stacking step immediate: the vector error is the maximum of the scalar component errors.

\paragraph{Step 2 (Distinct token codes)}
For each pair \((k,i)\), define
\[
\gamma_i^k:=\sum_{j=1}^{m}\xi_{ij}^k.
\]
The equalities \(\gamma_i^k=\gamma_{i'}^{k'}\) define finitely many proper affine hyperplanes in the parameter space of the coefficients \(\{\xi_{ij}^k\}\). Their complement is therefore dense, so we may perturb the coefficients arbitrarily slightly to new coefficients \(\{\widetilde\xi_{ij}^k\}\) such that all numbers
\[
\widetilde\gamma_i^k:=\sum_{j=1}^{m}\widetilde\xi_{ij}^k,
\qquad
1\le k\le p,\;\;1\le i\le n,
\]
are pairwise distinct.

Define
\[
\widetilde\beta_k(g)
:=
\sum_{i=1}^{n}
c_i^k\,
\sigma\!\Bigl(
\sum_{j=1}^{m}\widetilde\xi_{ij}^k\,g(\boldsymbol x_j)+\theta_i^k
\Bigr),
\]
and
\[
\widetilde{\mathcal D}(g)(\boldsymbol y)
:=
\sum_{k=1}^{p}\widetilde\beta_k(g)\,\boldsymbol\tau_k(\boldsymbol y).
\]
Because \(\widetilde{\mathcal D}(g)(\boldsymbol y)\) depends continuously on the finitely many parameters \(\{\widetilde\xi_{ij}^k\}\), and because \(\mathcal G\times\Omega_y\) is compact, the perturbation can be chosen small enough that
\begin{equation}
\sup_{g\in\mathcal G}\sup_{\boldsymbol y\in\Omega_y}
\bigl\|
\mathcal T(g)(\boldsymbol y)-\widetilde{\mathcal D}(g)(\boldsymbol y)
\bigr\|_\infty
<
\frac{\varepsilon}{2}.
\label{eq:setonet_key_uat_perturbed}
\end{equation}

\paragraph{Step 3 (Exact SetONet-Key realization of the perturbed network)}
The idea is to make the Key Net isolate each sensor via interpolation, let the Value Net carry both the sensor reading and a constant in a two-dimensional output, and use the constant channel after aggregation as a per-token identifier so that the shared readout can assign the correct bias and output component to each token.

Set
\[
M=m,\qquad
d_k=m,\qquad
d_v=2,\qquad
n_{\mathrm{pool}}=pn,
\]
and index the \(n_{\mathrm{pool}}\) tokens by pairs \((k,i)\), where \(1\le k\le p\) and \(1\le i\le n\).

Since \(\mathrm{PE}\) is injective, the encoded sensor locations
\[
\boldsymbol e_j:=\mathrm{PE}(\boldsymbol x_j)\in\mathbb R^{d_{\mathrm{PE}}},
\qquad
j=1,\dots,m,
\]
are pairwise distinct. Define, for \(j=1,\dots,m\),
\[
\eta_j(\boldsymbol z)
:=
\prod_{\substack{r=1\\ r\neq j}}^{m}
\frac{\|\boldsymbol z-\boldsymbol e_r\|_2^2}{\|\boldsymbol e_j-\boldsymbol e_r\|_2^2},
\qquad
k^\star(\boldsymbol z)
:=
\bigl(\eta_1(\boldsymbol z),\dots,\eta_m(\boldsymbol z)\bigr).
\]
Then \(k^\star:\mathbb R^{d_{\mathrm{PE}}}\to\mathbb R^m\) is continuous and satisfies
\[
k^\star(\boldsymbol e_j)=\widehat{\boldsymbol e}_j\in\mathbb R^m,
\qquad
j=1,\dots,m.
\]

Let \(\alpha_m>0\) be the deterministic normalization factor in \(\mathcal A_{\mathrm{key}}\) when \(M=m\). Since \(0\in \operatorname{int}(a_{\mathrm{mix}}(I))\) and only finitely many coefficients \(\widetilde\xi_{ij}^k\) occur, there exists \(\lambda>0\) such that
\[
\frac{\widetilde\xi_{ij}^k}{\alpha_m\lambda}\in a_{\mathrm{mix}}(I)
\qquad
\text{for all }i,j,k.
\]
Define the ideal Value Net
\[
v^\star(u):=(\lambda u,\lambda)\in\mathbb R^2.
\]
For the token indexed by \((k,i)\), define the query vector
\[
\boldsymbol q_{k,i}
=
\sqrt{d_k}\,
\Bigl(
a_{\mathrm{mix}}^{-1}\!\bigl(\widetilde\xi_{i1}^k/(\alpha_m\lambda)\bigr),
\dots,
a_{\mathrm{mix}}^{-1}\!\bigl(\widetilde\xi_{im}^k/(\alpha_m\lambda)\bigr)
\Bigr)
\in\mathbb R^{m}.
\]

Now present \(g\in\mathcal G\) to the branch as
\(
\{(\boldsymbol x_j,g(\boldsymbol x_j))\}_{j=1}^{m}.
\)
With the ideal Key Net \(k^\star\) and ideal Value Net \(v^\star\),
\[
\boldsymbol K_j
=
k^\star(\boldsymbol e_j)
=
\widehat{\boldsymbol e}_j,
\qquad
\boldsymbol V_j
=
v^\star(g(\boldsymbol x_j))
=
(\lambda g(\boldsymbol x_j),\lambda).
\]
Hence, for the token \((k,i)\), the score against sensor \(j\) is
\[
\frac{\boldsymbol q_{k,i}\cdot \boldsymbol K_j}{\sqrt{d_k}}
=
a_{\mathrm{mix}}^{-1}\!\bigl(\widetilde\xi_{ij}^k/(\alpha_m\lambda)\bigr).
\]
Therefore,
\[
\mathcal A_{\mathrm{key}}\!\left(
\frac{\boldsymbol q_{k,i}\mathbf K^\top}{\sqrt{d_k}}
\right)_j
=
\alpha_m\,
a_{\mathrm{mix}}\!\left(
a_{\mathrm{mix}}^{-1}\!\bigl(\widetilde\xi_{ij}^k/(\alpha_m\lambda)\bigr)
\right)
=
\frac{\widetilde\xi_{ij}^k}{\lambda}.
\]
Thus the \((k,i)\)-th row of the ideal token matrix
\[
\mathbf P^\star
=
\operatorname{TSA}_{\mathcal A_{\mathrm{key}}}(\mathbf Q,\mathbf K,\mathbf V)
\in \mathbb R^{pn\times 2}
\]
is exactly
\begin{align}
\mathbf P^\star_{(k,i),:}
&=
\sum_{j=1}^{m}
\frac{\widetilde\xi_{ij}^k}{\lambda}\,
(\lambda g(\boldsymbol x_j),\lambda)
\nonumber\\
&=
\left(
\sum_{j=1}^{m}\widetilde\xi_{ij}^k\,g(\boldsymbol x_j),
\sum_{j=1}^{m}\widetilde\xi_{ij}^k
\right)
=
\left(
\sum_{j=1}^{m}\widetilde\xi_{ij}^k\,g(\boldsymbol x_j),
\widetilde\gamma_i^k
\right).
\label{eq:setonet_key_uat_token_row}
\end{align}

Because the codes \(\{\widetilde\gamma_i^k\}\) are pairwise distinct, the Lagrange polynomials
\[
L_{k,i}(c)
:=
\prod_{(k',i')\neq(k,i)}
\frac{c-\widetilde\gamma_{i'}^{k'}}{\widetilde\gamma_i^k-\widetilde\gamma_{i'}^{k'}},
\qquad
1\le k\le p,\;\;1\le i\le n,
\]
are well-defined. Define the ideal row-wise readout
\[
\rho_{\mathrm{tok}}^\star(s,c)
:=
\sum_{k=1}^{p}\sum_{i=1}^{n}
L_{k,i}(c)\,
\sigma(s+\theta_i^k)\,
\widehat{\boldsymbol e}_{\ell(k)},
\qquad
(s,c)\in\mathbb R^2.
\]
This function is continuous. By the interpolation property
\(
L_{k,i}(\widetilde\gamma_{i'}^{k'})=\delta_{kk'}\delta_{ii'},
\)
\cref{eq:setonet_key_uat_token_row} yields
\[
\rho_{\mathrm{tok}}^\star\!\bigl(\mathbf P^\star_{(k,i),:}\bigr)
=
\sigma\!\Bigl(
\sum_{j=1}^{m}\widetilde\xi_{ij}^k\,g(\boldsymbol x_j)+\theta_i^k
\Bigr)\,
\widehat{\boldsymbol e}_{\ell(k)}.
\]

Choose \(W\in\mathbb R^{p\times (pn)}\) so that
\[
W_{k,(k',i)}
:=
c_i^k\,\delta_{kk'},
\qquad
1\le k,k'\le p,\;\;1\le i\le n.
\]
Then
\[
W\rho_{\mathrm{tok}}^\star(\mathbf P^\star)\in\mathbb R^{p\times d_{\text{out}}},
\]
and, for each \(k=1,\dots,p\),
\[
\bigl(W\rho_{\mathrm{tok}}^\star(\mathbf P^\star)\bigr)_{k,:}
=
\sum_{i=1}^{n}
c_i^k\,
\rho_{\mathrm{tok}}^\star\!\bigl(\mathbf P^\star_{(k,i),:}\bigr)^\top
=
\widetilde\beta_k(g)\,\widehat{\boldsymbol e}_{\ell(k)}^\top.
\]
Hence
\[
\boldsymbol b_k^\star(g)
:=
\bigl(W\rho_{\mathrm{tok}}^\star(\mathbf P^\star)\bigr)_{k,:}^\top
=
\widetilde\beta_k(g)\,\widehat{\boldsymbol e}_{\ell(k)}
\in\mathbb R^{d_{\text{out}}}.
\]

Set
\[
\boldsymbol t_k^\star(\boldsymbol y)
:=
\boldsymbol\tau_k(\boldsymbol y)
=
\sigma(\boldsymbol w_k\cdot\boldsymbol y+\zeta_k)\,\widehat{\boldsymbol e}_{\ell(k)},
\qquad
\boldsymbol b_0=\mathbf 0.
\]
Then
\[
\boldsymbol b_k^\star(g)\odot \boldsymbol t_k^\star(\boldsymbol y)
=
\widetilde\beta_k(g)\,\boldsymbol\tau_k(\boldsymbol y),
\]
so the SetONet-Key synthesis rule in \cref{eq:setonet_output_simplified} reproduces \(\widetilde{\mathcal D}(g)(\boldsymbol y)\) exactly.

\paragraph{Step 4 (Replacement by admissible MLP subnets)}
The ideal maps \(k^\star\), \(v^\star\), and \(\rho_{\mathrm{tok}}^\star\) are continuous on the relevant compact sets. Indeed,
\[
K_{\mathrm{PE}}:=\mathrm{PE}(\Omega_x)\subset\mathbb R^{d_{\mathrm{PE}}}
\]
is compact because \(\mathrm{PE}\) is continuous and \(\Omega_x\) is compact, while
\[
K_u
:=
\{g(\boldsymbol x_j): g\in\mathcal G,\; j=1,\dots,m\}\subset\mathbb R
\]
is compact because the evaluation maps \(g\mapsto g(\boldsymbol x_j)\) are continuous and \(\mathcal G\) is compact. Likewise,
\[
K_P
:=
\bigcup_{k=1}^{p}\bigcup_{i=1}^{n}
\left\{
\left(
\sum_{j=1}^{m}\widetilde\xi_{ij}^k\,g(\boldsymbol x_j),
\widetilde\gamma_i^k
\right)
:\;
g\in\mathcal G
\right\}
\subset\mathbb R^2
\]
is compact.

By the finite-dimensional universal approximation theorem for MLPs with continuous non-polynomial activation \cite{leshno1993multilayer}, each scalar coordinate of \(k^\star\), \(v^\star\), and \(\rho_{\mathrm{tok}}^\star\) can be approximated uniformly on \(K_{\mathrm{PE}}\), \(K_u\), and on a compact neighborhood of \(K_P\), respectively. Stacking these coordinatewise approximants in parallel yields actual MLPs \(k_{\boldsymbol\theta}\), \(v_{\boldsymbol\theta}\), and \(\rho_{\mathrm{tok}}\) that approximate the ideal maps uniformly as closely as desired. The query tokens \(\mathbf Q\) and the linear map \(W\) are free parameters, so they are realized exactly.

The trunk basis \(\{\boldsymbol t_k^\star\}_{k=1}^{p}\) is also realized exactly by an admissible trunk MLP: the \(k\)-th hidden unit computes \(\sigma(\boldsymbol w_k\cdot\boldsymbol y+\zeta_k)\), and the output layer routes this scalar to the \(\ell(k)\)-th component of the \(k\)-th basis block.

Because the sensor set is fixed and finite, and because matrix multiplication, \(a_{\mathrm{mix}}\), the row-wise readout, reshaping, and the synthesis map in \cref{eq:setonet_output_simplified} are continuous on the relevant compact sets, the full SetONet-Key operator depends continuously on these subnet approximations. Therefore the above MLP approximants can be chosen so that the resulting SetONet-Key operator \(\mathcal T_{\boldsymbol\theta}\) satisfies
\[
\sup_{g\in\mathcal G}\sup_{\boldsymbol y\in\Omega_y}
\bigl\|
\widetilde{\mathcal D}(g)(\boldsymbol y)-\mathcal T_{\boldsymbol\theta}(g)(\boldsymbol y)
\bigr\|_\infty
<
\frac{\varepsilon}{2}.
\]
Combining this estimate with \cref{eq:setonet_key_uat_perturbed} and using the triangle inequality gives
\[
\sup_{g\in\mathcal G}\sup_{\boldsymbol y\in\Omega_y}
\bigl\|
\mathcal T(g)(\boldsymbol y)-\mathcal T_{\boldsymbol\theta}(g)(\boldsymbol y)
\bigr\|_\infty
<
\varepsilon.
\]
This proves the theorem.
\end{proof}

\paragraph{Remarks}

First, the construction relies essentially on the \emph{non-normalized} mixing rule in assumption (2). It does not apply to row-wise softmax normalization.

Second, the \(\ell^\infty\)-norm is adopted because it makes the vector-valued stacking in Step 1 immediate: the supremum over components of the individual scalar errors transfers directly to the vector error without dimensional constants. Since all norms on \(\mathbb R^{d_{\text{out}}}\) are equivalent, this choice entails no loss of generality.

Third, the theorem is stated for fixed sensor locations. 
The variable-cardinality and point-cloud settings studied in \cref{sec:results} fall outside this scope, and a universal approximation result for those regimes remains open. 
The experimental results in \cref{sec:results} provide empirical evidence that the architecture generalizes beyond the fixed-sensor subclass.

\section{Implementation Details}
\label{sec:implementation}

All models are implemented in PyTorch~\cite{paszke2019pytorch} within a unified training and evaluation pipeline. For fair comparison, we use identical dataset splits, sensor protocols, and optimization settings across methods within each benchmark family, while allowing benchmark-dependent input/output dimensions and latent dimension $p$. We compare SetONet variants against standard DeepONet and VIDON~\cite{prasthofer2022variable}.

\paragraph{Architecture}
SetONet follows the branch--trunk construction described in \cref{sec:setonet_architecture}. For SetONet (Attention/Mean/Sum), sensor coordinates are encoded with sinusoidal positional features ($d_{\mathrm{PE}}=64$), followed by a shared value network and permutation-invariant aggregation; attention pooling uses one pooling token with four heads to balance between performance and computation efficiency. SetONet-Key uses the dedicated key and value networks described in \cref{sec:branch_net}. We fix $p=32$ for all 1D benchmark problems and $p=128$ for 2D benchmark problems, all trunk networks are defined as ReLU networks~\cite{nair2010rectified} of width 256. This yields a controlled comparison at matched latent and trunk scale while preserving each model's native branch design. DeepONet uses a standard MLP branch, whereas VIDON follows the reference multi-head attention architecture proposed in \cite{prasthofer2022variable}, employing four attention heads with an encoder dimension of $40$ and a head output size of 64. For benchmarks where DeepONet is applicable, SetONet variants are configured with parameter counts matched to or below DeepONet. VIDON retains its original architectural scaling from~\cite{prasthofer2022variable} and is therefore larger (${\sim}700\text{k}$ trainable parameters) than DeepONet and SetONet variants (${\sim}300\text{k}$ trainable parameters). Exact parameter counts for different examples are reported in \cref{tab:param_counts}.

\paragraph{Training}
All models are trained with Adam optimizer~\cite{kingma2014adam}, using initial learning rate at $5\times10^{-4}$ with no weight decay and standard gradient clipping to ensure training stability. We use a total of 125{,}000 optimization steps (batch size 64) for the \emph{Derivative}, \emph{Integral}, \emph{Darcy 1D}, and \emph{Elastic Plate} benchmarks, and 50{,}000 steps (batch size 32) for \emph{Heat}, \emph{Advection--Diffusion}, \emph{Diffraction}, and \emph{Optimal Transport} examples. Learning-rate decay follows a multi-step schedule. For the 125k-step training regime, decay milestones are set at 25k and 75k
steps, with multiplicative factors of 0.2 and 0.5, respectively. For the 50k-step regime, milestones occur at 15k and 30k steps, with factors of $0.2$ and $0.5$. We conduct five random seeded trials for all of our examples to ensure our claims are valid and robust to test, we report aggregated metrics such as mean and standard deviation for all examples. 

Complete list of hyperparameters, architectural configurations and dataset generation settings are provided in \cref{app:config,app:datasets}.

\section{Numerical Experiments}
\label{sec:results}

We evaluate SetONet on two groups of benchmarks—against DeepONet and VIDON baselines for the first group, and comparison against VIDON only for the second, since those tasks use unordered point source inputs for which standard DeepONet is not directly applicable, demonstrating the broader applicability of our approach. 

Our experimental setup and results emphasize two claims. First, on \emph{Derivative}, \emph{Anti-derivative}, \emph{Darcy 1D}, and \emph{Elastic Plate} (2D), we perform parameter-matched comparisons with approximately 250,000 trainable parameters (see~\cref{tab:param_counts}) for both SetONet and DeepONet. We evaluate SetONet under three sensor configurations (Fixed, Variable, Drop-off) and compare against DeepONet and VIDON on these configurations. SetONet achieves lower error than both DeepONet and VIDON on fixed layouts and sustains high accuracy under changing layouts and evaluation-time drop-off, reflecting robustness that follows directly from its permutation-invariant set encoder. 

Second, on \emph{Heat} (2D), \emph{Advection–Diffusion} (2D), \emph{Phase-Screen Diffraction} (2D), and \emph{Optimal Transport} (2D), inputs arrive as point clouds data. SetONet solves these end-to-end with a single-stage training recipe and a lightweight architecture—without rasterization, ad-hoc imputation, or multi-stage pipelines. Importantly, SetONet achieves superior performance across all examples compared to VIDON, while using smaller model sizes and requiring less training time.

Taken together, the results position SetONet as a drop-in replacement for DeepONet on classical benchmarks and a practical, more general framework for operator learning with variable or unstructured inputs.

\subsection{Sensor Configurations for Testing}
For the first set of examples, we evaluate all methods under three sensor configurations. These configurations reflect common scenarios, including sensor location variability and sensor failure. 

\paragraph{Fixed (static layout)}
A single set of \(M\) fixed sensor locations is chosen once and used for all samples, batches, and epochs during both training and evaluation. No sensors are removed. This setting is used for all benchmarks (\emph{Derivative}, \emph{Anti-derivative}, \emph{Darcy 1D}, \emph{Elastic 2D}).

\paragraph{Variable (changing sensor layouts during training \emph{and} evaluation)}
A given model is both trained and tested under changing discretizations of the input functions. Concretely, for the \emph{Derivative} and \emph{Anti-derivative} examples we uniformly resample the \(M\) sensor locations at the start of each batch (all samples in the batch share that layout; \(M\) is fixed). For \emph{Darcy 1D} and \emph{Elastic 2D} we apply sensor dropout with nearest-neighbor replacement during both training and evaluation, preserving input size and ensuring a fair comparison across methods. 

\paragraph{Drop-off (evaluation-only ablation with replacement)}
Models are evaluated under a controlled sensor-failure scenario: at test time, we uniformly drop \(20\%\) of sensors and fill each dropped index by duplicating the nearest retained sensor’s location–value pair, so the input size remains \(M\). For the \emph{Derivative} and \emph{Anti-derivative} tasks, training follows the \emph{Variable} configuration, and the \(20\%\) drop-off is applied on top of the resampled layout only at evaluation. For \emph{Darcy 1D} and \emph{Elastic 2D}, training follows the \emph{Fixed} configuration, and the same \(20\%\) drop-off is applied only at evaluation.

All subsequent subsections report results under these \emph{Fixed}, \emph{Variable}, and \emph{Drop-off} configurations as described above.


\begin{figure}[ht]
    \centering
    \includegraphics[width=\textwidth]{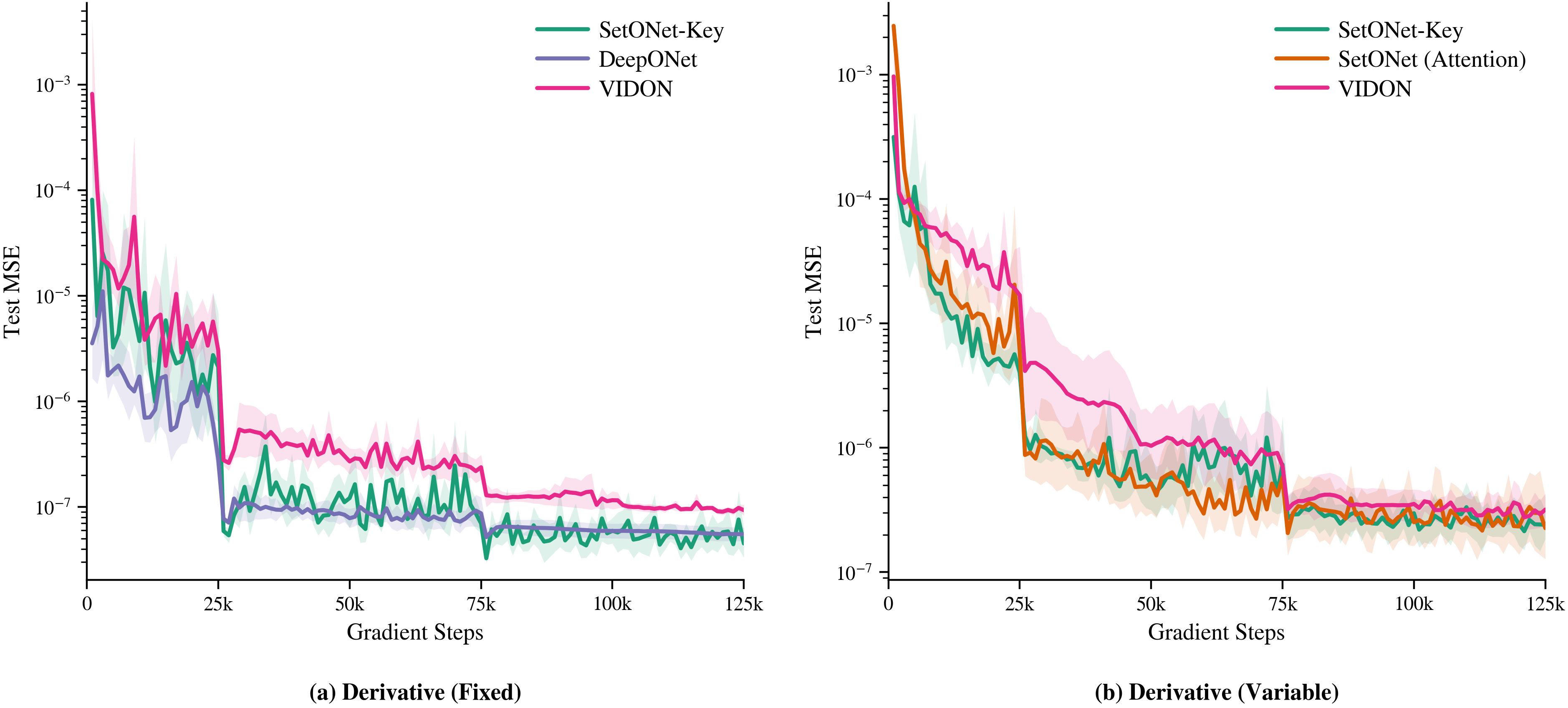}
    \caption{Test MSE histories for the Derivative benchmark under (a) the \emph{fixed} sensor configuration and (b) the \emph{variable} sensor configuration. Solid curves show the mean over five independent random initializations, and shaded bands indicate one standard deviation across seeds in $\log_{10}$-space. Curves are reported without temporal smoothing.}
    \label{fig:derivative_loss_main}
\end{figure}

\begin{figure}[ht]
    \centering
    \includegraphics[width=\textwidth]{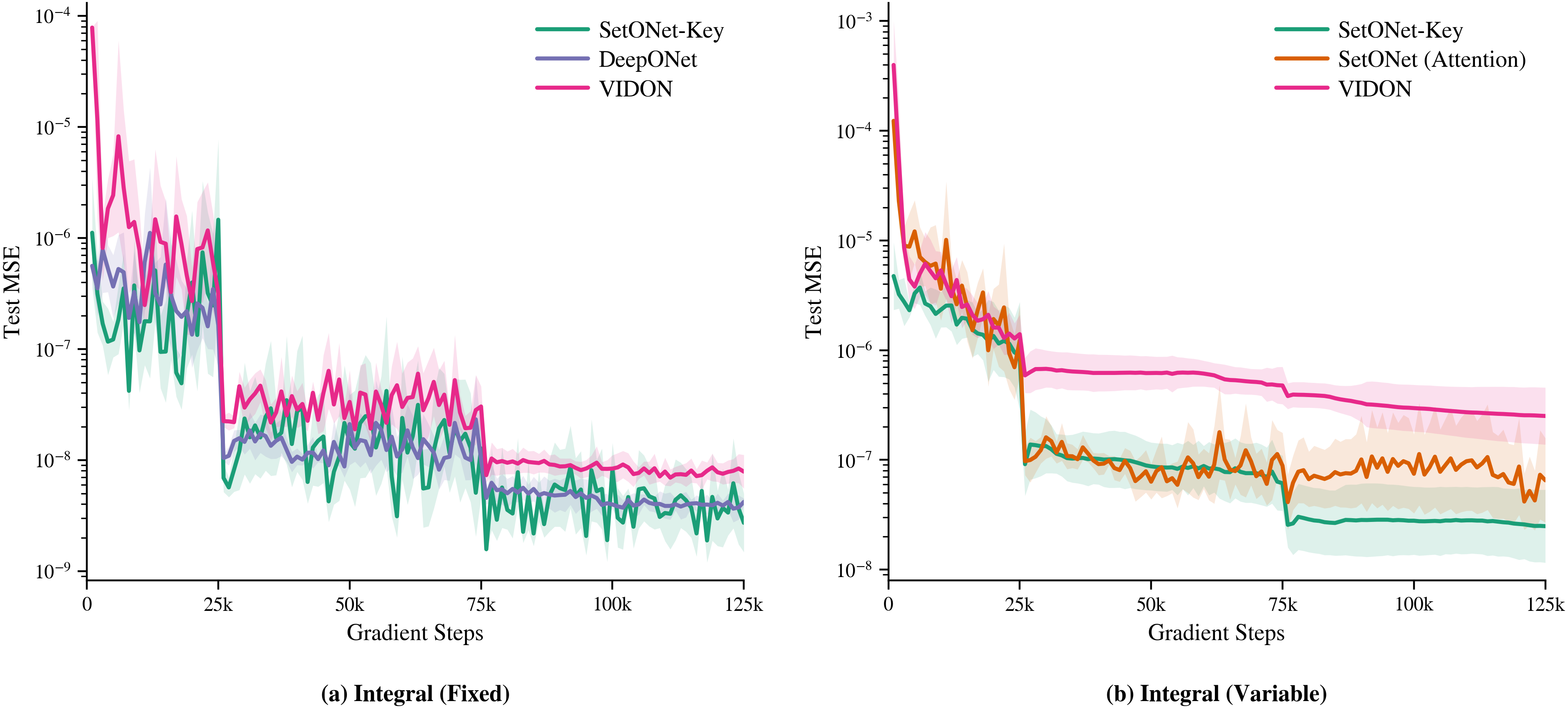}
    \caption{Test MSE histories for the Integral benchmark under (a) the \emph{fixed} sensor configuration and (b) the \emph{variable} sensor configuration. Solid curves show the mean over five independent random initializations, and shaded bands indicate one standard deviation across seeds in $\log_{10}$-space. Curves are reported without temporal smoothing.}
    \label{fig:integral_loss_main}
\end{figure}

We first compare SetONet, DeepONet and VIDON on two 1D benchmark tasks: differentiation and integration. These provide simple but controlled tests of operator learning under different input sampling conditions.

In both cases, the underlying function family is
\[
f(x)=ax^3+bx^2+cx+e\sin x,
\qquad
x\in[-1,1],
\]
where \(a,b,c,e\) are randomly sampled coefficients, with zero integration constant. For differentiation, the operator is
\[
\mathcal{T}: f(x) \mapsto \frac{d}{dx} f(x).
\]
For the integration task, we learn the inverse mapping
\[
\mathcal{T}: f'(x) \mapsto f(x)=\int_0^x f'(s)\, ds,
\]
which is well defined because $f(0)=0$. The functions are resampled independently during training.

We present quantitative results for both examples under different sampling scenarios in~\cref{tab:setonet_results_pow}. When measured in $\ell_2$ relative error, SetONet outperforms both standard DeepONet and VIDON. More importantly, its accuracy remains consistent across different sensor configurations, with only limited loss in performance. This highlights the efficacy and robustness of our proposed approach. In addition, validation curves and qualitative results are shown in~\cref{fig:derivative_loss_main}, \cref{fig:integral_loss_main}, \cref{fig:combined_derivative_prediction}, and \cref{fig:Integral_combined_prediction}, further supporting our claims.

\begin{figure}[ht]
    \centering
    \includegraphics[width=\textwidth]{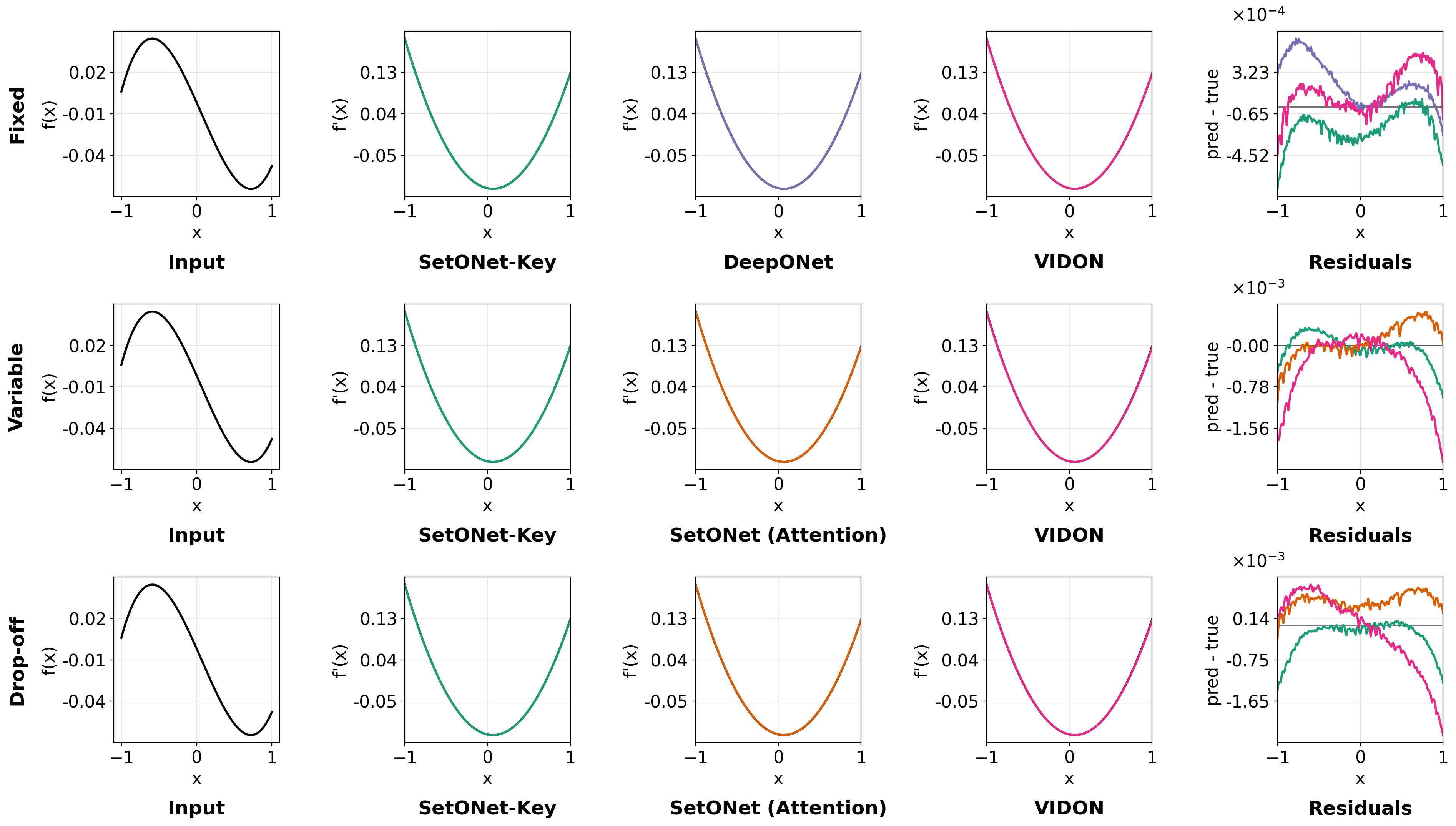}
    \caption{Qualitative comparison for the 1D derivative benchmark under three sensor regimes. Rows correspond to fixed, variable, and drop-off settings; columns show the input \(f(x)\) (black), three model predictions against the ground truth (black), and residuals \((\mathrm{pred}-\mathrm{true})\). In the fixed setting, the compared models are SetONet-Key (green), DeepONet (purple), and VIDON (magenta); in the variable and drop-off settings, the compared models are SetONet-Key (green), SetONet (Attention) (orange), and VIDON (magenta).}
    \label{fig:combined_derivative_prediction}
\end{figure}

\begin{figure}[ht]
    \centering
    \includegraphics[width=\textwidth]{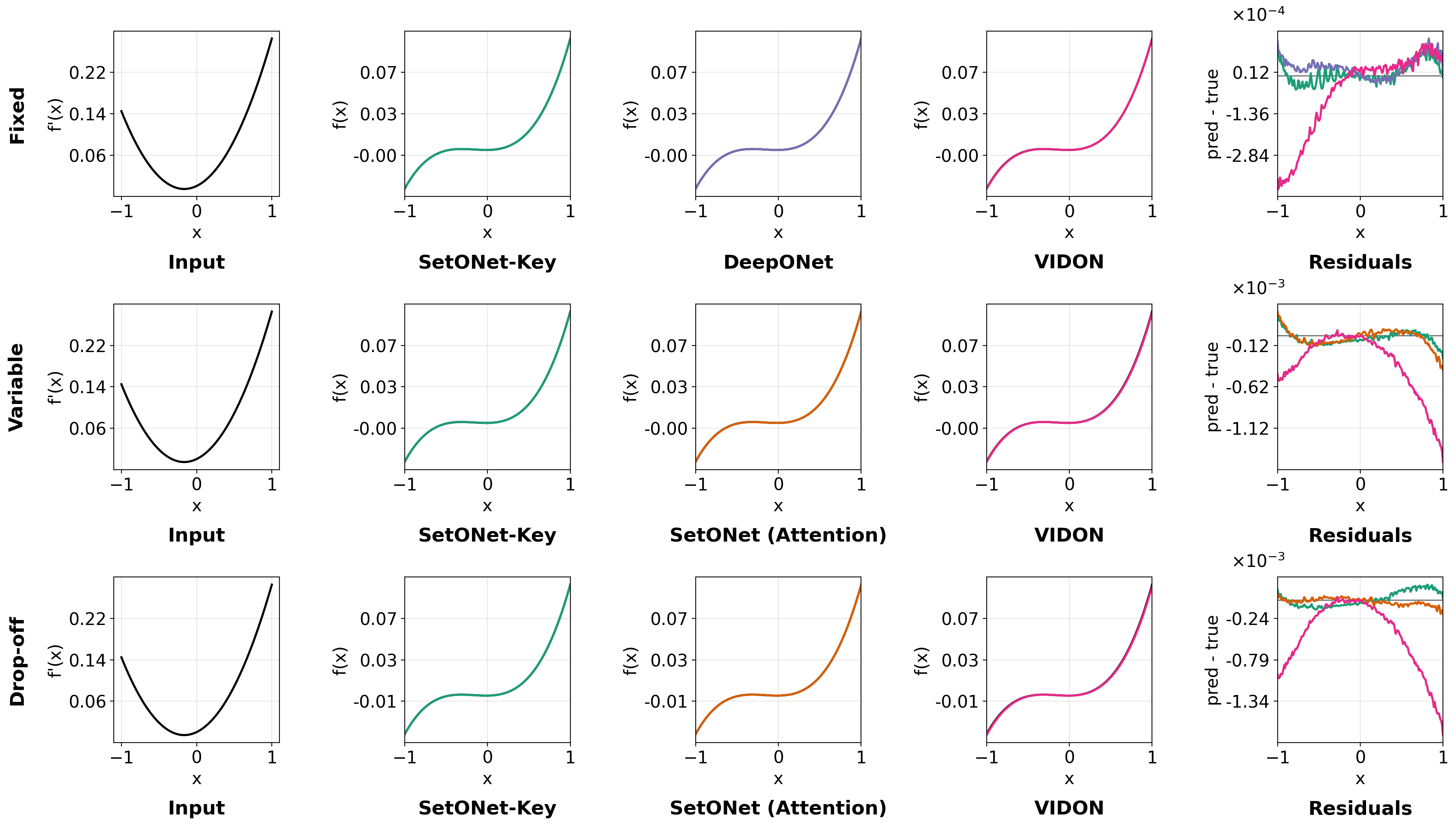}
    \caption{Qualitative comparison for the 1D integral benchmark under three sensor regimes. Rows correspond to fixed, variable, and drop-off settings; columns show the input \(f'(x)\) (black), three model predictions against the ground truth \(f(x)\) (black), and residuals \((\mathrm{pred}-\mathrm{true})\). In the fixed setting, the compared models are SetONet-Key (green), DeepONet (purple), and VIDON (magenta); in the variable and drop-off settings, the compared models are SetONet-Key (green), SetONet (Attention) (orange), and VIDON (magenta).}
    \label{fig:Integral_combined_prediction}
\end{figure}

\begin{table}[htbp]
\centering
\caption{Test relative $\ell_2$ error (mean $\pm$ std) comparing SetONet variants, DeepONet, and VIDON on benchmark problems across three sensor configurations (Fixed, Variable, Drop-off). `---' denotes not applicable.}
\label{tab:setonet_results_pow}
\setlength{\tabcolsep}{6pt}
\renewcommand{\arraystretch}{1.12}
\resizebox{\linewidth}{!}{%
\begin{tabular}{@{}lccc@{}}
\toprule
\multirow{2}{*}{\textbf{Model}} & \multicolumn{3}{c}{\textbf{Sensor Configuration}} \\
\cmidrule(lr){2-4}
& \textbf{Fixed} & \textbf{Variable} & \textbf{Drop-off} \\
\midrule
\multicolumn{4}{c}{\textbf{Derivative}} \\
\midrule
SetONet-Key & \cellcolor{gray!20}\sci{1.97}{-3} $\pm$ \sci{3.20}{-4} & \sci{4.24}{-3} $\pm$ \sci{5.38}{-4} & \sci{5.64}{-3} $\pm$ \sci{1.88}{-3} \\
SetONet (Attention) & \sci{3.09}{-3} $\pm$ \sci{6.76}{-4} & \cellcolor{gray!20}\sci{4.11}{-3} $\pm$ \sci{1.28}{-3} & \cellcolor{gray!20}\sci{4.23}{-3} $\pm$ \sci{1.27}{-3} \\
SetONet (Mean) & \sci{2.80}{-3} $\pm$ \sci{3.00}{-4} & \sci{1.78}{-2} $\pm$ \sci{1.24}{-3} & \sci{1.78}{-2} $\pm$ \sci{1.19}{-3} \\
SetONet (Sum) & \sci{3.12}{-3} $\pm$ \sci{9.71}{-4} & \sci{1.65}{-2} $\pm$ \sci{1.11}{-3} & \sci{1.66}{-2} $\pm$ \sci{1.03}{-3} \\
DeepONet & \sci{2.24}{-3} $\pm$ \sci{1.88}{-4} & \textemdash & \textemdash \\
VIDON & \sci{2.91}{-3} $\pm$ \sci{1.83}{-4} & \sci{4.86}{-3} $\pm$ \sci{7.31}{-4} & \sci{9.83}{-3} $\pm$ \sci{6.01}{-4} \\
\midrule
\multicolumn{4}{c}{\textbf{Integral}} \\
\midrule
SetONet-Key & \cellcolor{gray!20}\sci{1.11}{-3} $\pm$ \sci{3.12}{-4} & \cellcolor{gray!20}\sci{2.83}{-3} $\pm$ \sci{1.11}{-3} & \cellcolor{gray!20}\sci{3.36}{-3} $\pm$ \sci{1.18}{-3} \\
SetONet (Attention) & \sci{2.08}{-3} $\pm$ \sci{1.46}{-4} & \sci{5.34}{-3} $\pm$ \sci{3.55}{-3} & \sci{5.45}{-3} $\pm$ \sci{3.51}{-3} \\
SetONet (Mean) & \sci{1.78}{-3} $\pm$ \sci{9.21}{-4} & \sci{9.69}{-3} $\pm$ \sci{6.42}{-4} & \sci{9.80}{-3} $\pm$ \sci{6.75}{-4} \\
SetONet (Sum) & \sci{1.97}{-3} $\pm$ \sci{3.62}{-4} & \sci{9.72}{-3} $\pm$ \sci{3.81}{-4} & \sci{9.79}{-3} $\pm$ \sci{4.13}{-4} \\
DeepONet & \sci{1.59}{-3} $\pm$ \sci{5.40}{-5} & \textemdash & \textemdash \\
VIDON & \sci{2.07}{-3} $\pm$ \sci{3.21}{-4} & \sci{9.71}{-3} $\pm$ \sci{3.20}{-3} & \sci{9.96}{-3} $\pm$ \sci{3.14}{-3} \\
\midrule
\multicolumn{4}{c}{\textbf{Darcy 1D}} \\
\midrule
SetONet-Key & \cellcolor{gray!20}\sci{2.97}{-3} $\pm$ \sci{7.58}{-4} & \cellcolor{gray!20}\sci{3.58}{-3} $\pm$ \sci{3.58}{-4} & \cellcolor{gray!20}\sci{3.77}{-3} $\pm$ \sci{5.60}{-4} \\
SetONet (Attention) & \sci{6.89}{-3} $\pm$ \sci{8.03}{-4} & \sci{8.21}{-3} $\pm$ \sci{7.09}{-4} & \sci{7.72}{-3} $\pm$ \sci{7.46}{-4} \\
SetONet (Mean) & \sci{5.37}{-3} $\pm$ \sci{1.85}{-4} & \sci{7.08}{-3} $\pm$ \sci{5.94}{-4} & \sci{6.29}{-3} $\pm$ \sci{1.59}{-4} \\
SetONet (Sum) & \sci{5.53}{-3} $\pm$ \sci{5.32}{-4} & \sci{7.15}{-3} $\pm$ \sci{9.61}{-4} & \sci{6.44}{-3} $\pm$ \sci{3.95}{-4} \\
DeepONet & \sci{4.40}{-3} $\pm$ \sci{3.17}{-4} & \textemdash & \textemdash \\
VIDON & \sci{9.91}{-3} $\pm$ \sci{1.12}{-3} & \sci{9.48}{-3} $\pm$ \sci{1.37}{-3} & \sci{1.05}{-2} $\pm$ \sci{1.09}{-3} \\
\midrule
\multicolumn{4}{c}{\textbf{Elastic Plate}} \\
\midrule
SetONet-Key & \cellcolor{gray!20}\sci{1.35}{-3} $\pm$ \sci{1.99}{-4} & \cellcolor{gray!20}\sci{3.00}{-3} $\pm$ \sci{1.75}{-4} & \cellcolor{gray!20}\sci{2.74}{-3} $\pm$ \sci{1.81}{-4} \\
SetONet (Attention) & \sci{3.20}{-3} $\pm$ \sci{4.45}{-4} & \sci{4.36}{-3} $\pm$ \sci{1.93}{-4} & \sci{4.09}{-3} $\pm$ \sci{2.80}{-4} \\
SetONet (Mean) & \sci{2.57}{-3} $\pm$ \sci{5.76}{-4} & \sci{3.95}{-3} $\pm$ \sci{4.43}{-4} & \sci{3.77}{-3} $\pm$ \sci{5.65}{-4} \\
SetONet (Sum) & \sci{2.64}{-3} $\pm$ \sci{3.66}{-4} & \sci{4.12}{-3} $\pm$ \sci{2.92}{-4} & \sci{3.69}{-3} $\pm$ \sci{2.37}{-4} \\
DeepONet & \sci{5.60}{-3} $\pm$ \sci{9.98}{-4} & \textemdash & \textemdash \\
VIDON & \sci{5.24}{-3} $\pm$ \sci{1.67}{-4} & \sci{6.38}{-3} $\pm$ \sci{7.05}{-4} & \sci{6.09}{-3} $\pm$ \sci{2.82}{-4} \\
\bottomrule
\end{tabular}%
}
\end{table}

\begin{table}[htbp]
\centering
\caption{Training time per epoch (in seconds) comparing SetONet variants, DeepONet, and VIDON on the Elastic Plate example.}
\label{tab:training_time}
\fontsize{10pt}{14.4pt}\selectfont
\setlength{\tabcolsep}{5pt}
\renewcommand{\arraystretch}{1.12}
\begin{tabular}{@{}ccccccc@{}}
\toprule
\multirow{2}{*}{\textbf{SetONet-Key}}
& \multicolumn{3}{c}{\textbf{SetONet}} 
& \multirow{2}{*}{\textbf{DeepONet}} 
& \multirow{2}{*}{\textbf{VIDON}} \\
\cmidrule(lr){2-4}
& \textbf{Attention} & \textbf{Mean} & \textbf{Sum} &  &  \\
\midrule
3.744 $\pm$ 0.004 & 3.623 $\pm$ 0.004 & 3.323 $\pm$ 0.003 & 3.316 $\pm$ 0.001 & 3.580 $\pm$ 0.010 & 5.628 $\pm$ 0.007 \\
\bottomrule
\end{tabular}
\end{table}

\subsection{1D Darcy Flow}
\label{subsec:results-1ddarcy}
In this example, we aim to learn the nonlinear operator for a 1D Darcy system. The problem of interest can be written as
\begin{equation}
    \frac{d}{dx}
    \left(
    -\kappa(u(x)) \frac{d u}{dx}
    \right)
     = f(x), \quad x\in[0, 1], 
\end{equation}
where the solution-dependent permeability is $\kappa(u(x)) = 0.2 + u^2(x)$ and the forcing term is a Gaussian random field $f(x) \sim \mathrm{GP}(0, k(x, x'))$ with
\[
k(x, x') = \sigma^2 \exp\!\left(- \frac{(x - x')^2}{2\ell_x^2}\right), \qquad \ell_x = 0.04, \quad \sigma^2 = 1.0.
\]
Homogeneous Dirichlet boundary conditions $u(0) = u(1) = 0$ are imposed at the boundaries. 
The problem is obtained from \cite{ingebrand2025basis} and we use a finite-difference solver to generate the necessary data. The Darcy flow example serves as a common benchmark for nonlinear operator learning tasks.

\begin{figure}[ht]
    \centering
    \includegraphics[width=\textwidth]{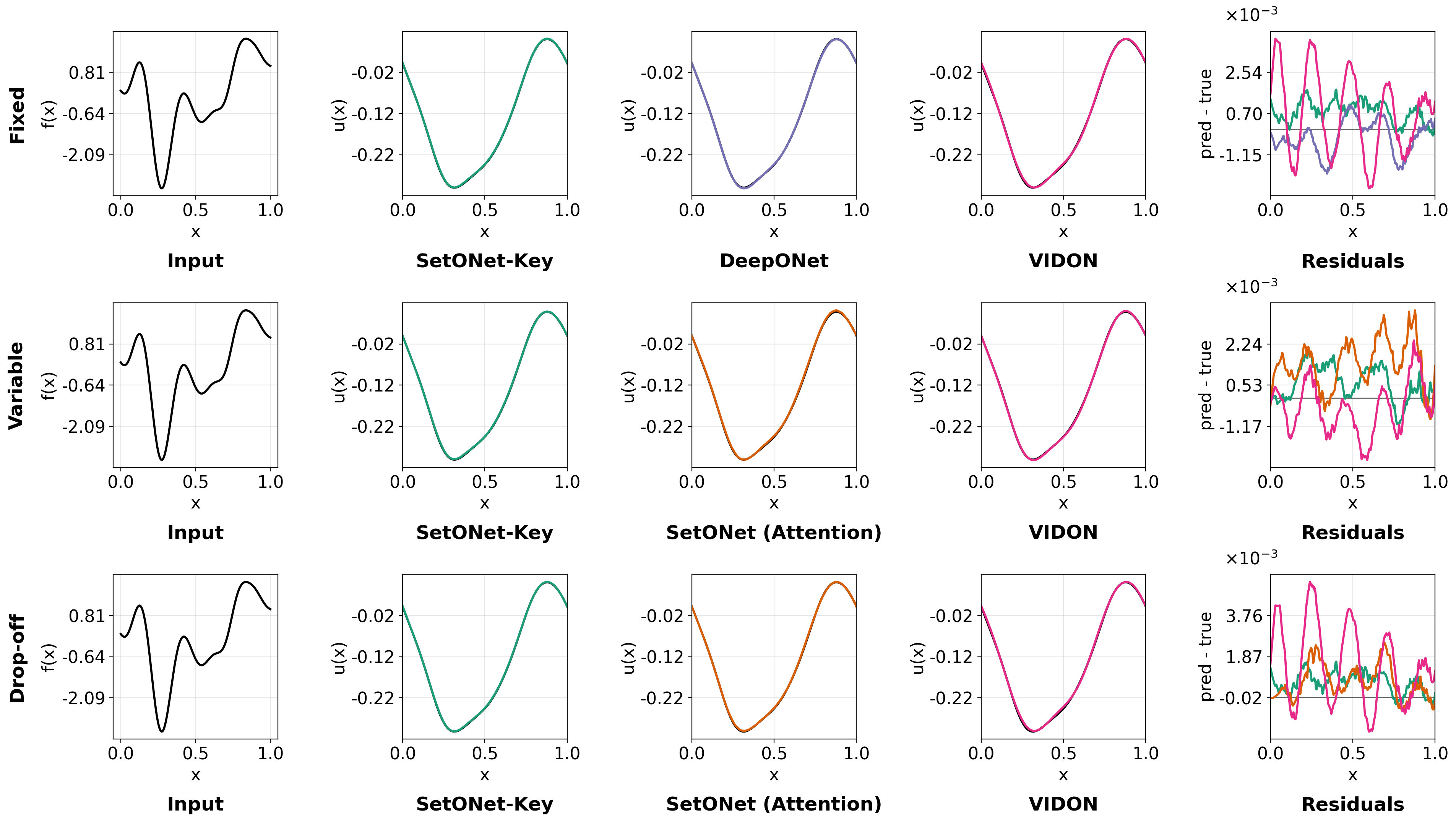}
    \caption{Qualitative comparison for the 1D Darcy flow benchmark under three sensor regimes. Rows correspond to fixed, variable, and drop-off settings; columns show the input \(f(x)\) (black), three model predictions against the ground truth \(u(x)\) (black), and residuals \((\mathrm{pred}-\mathrm{true})\). In the fixed setting, the compared models are SetONet-Key (green), DeepONet (purple), and VIDON (magenta); in the variable and drop-off settings, the compared models are SetONet-Key (green), SetONet (Attention) (orange), and VIDON (magenta).}
    \label{fig:Darcy_combined_prediction}
\end{figure}

We display the learning results in~\cref{fig:darcy_elastic_loss} across different trials and the relative error in~\cref{tab:setonet_results_pow}. In general, SetONet achieves comparable performance to the baselines across all tested scenarios. Furthermore, we highlight that, despite the problem being more difficult under the \emph{variable} and \emph{drop-off} settings, we do not observe a significant decrease in SetONet performance, demonstrating its strength and robustness. Additionally, we note that training with SetONet is stable and reliable, as shown in~\cref{fig:darcy_elastic_loss}; in fact, we observe similar variance across seeded trials compared to baseline methods. Qualitatively, SetONet achieves highly accurate predictions in all tested sensor layouts (see \cref{fig:Darcy_combined_prediction}).

\subsection{Elastic Plate}
\label{subsec:results-elastic}

\begin{figure}[ht]
    \centering
    \includegraphics[width=\textwidth]{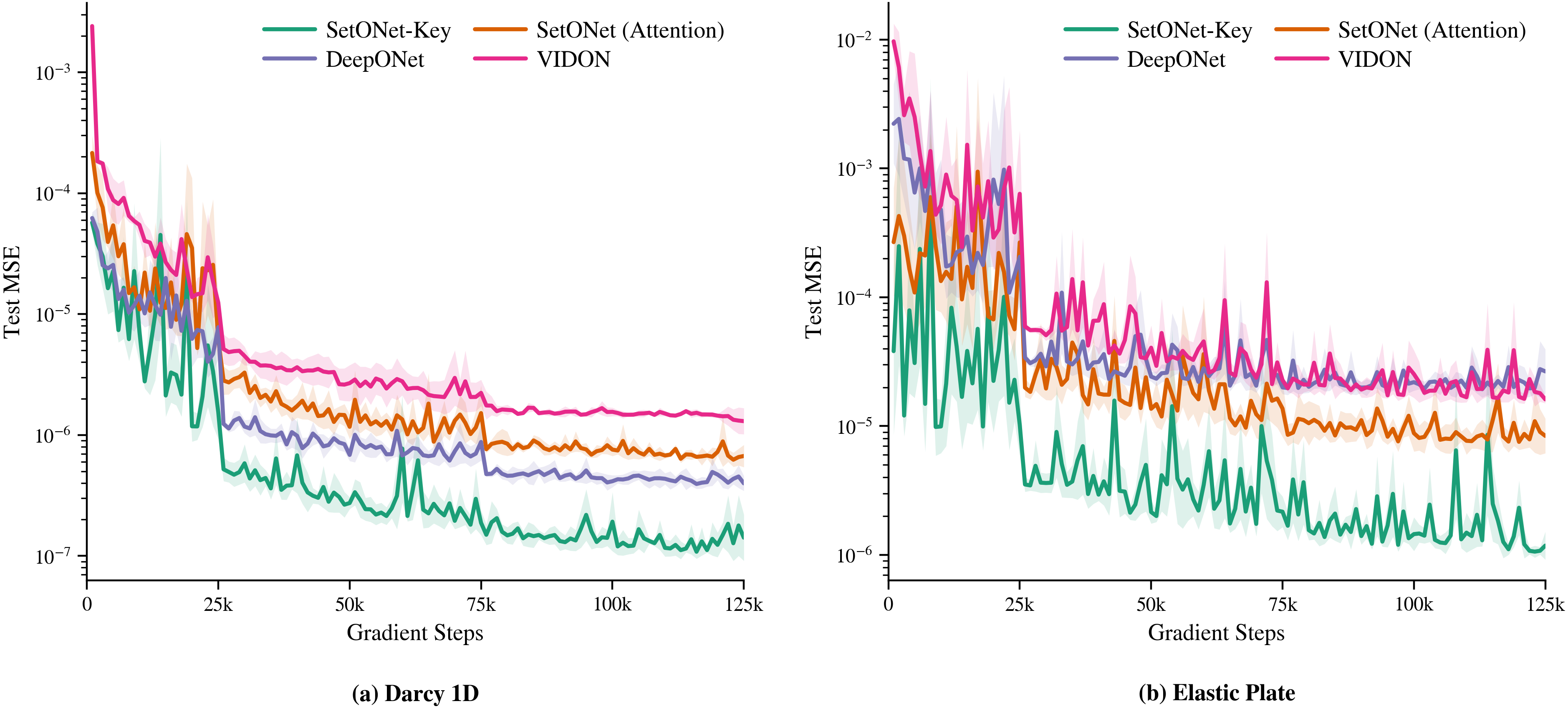}
    \caption{Test MSE histories on two classical PDE benchmarks: (a) Darcy 1D and (b) Elastic Plate. Solid curves show the mean over five independent random initializations, and shaded bands indicate one standard deviation across seeds in $\log_{10}$-space. Curves are reported without temporal smoothing.}
    \label{fig:darcy_elastic_loss}
\end{figure}

We test our approach on a commonly used 2D benchmark problem that describes an elasticity model; variations of the same problem have been used in \cite{ingebrand2025basis, goswami2022deep}.

In this example, we consider a rectangular plate with a center hole subjected to in-plane loading modeled as a two-dimensional problem of plane stress elasticity. The problem is governed by the  equation
\begin{equation}
\label{eq:elasticity}
    \nabla \cdot \boldsymbol{\sigma} + \boldsymbol{f}(\boldsymbol x) = \mathbf{0},\hspace{10pt} \boldsymbol x = (x,y),
\end{equation}
with boundary conditions 
\begin{equation*}
\label{eq:elasticity-BC}
    (u, v) = 0, \hspace{10pt} \forall\;\; x=0,
\end{equation*}
where $\boldsymbol{\sigma}$ is the Cauchy stress tensor, $\boldsymbol{f}$ is the body force, $u(\boldsymbol x)$ and $v(\boldsymbol x)$ represent the $x$- and $y$-displacement, respectively. Additionally, we note that $E$ and $\nu$ represent the material’s Young’s modulus and Poisson’s ratio, respectively, and are fixed for the problem. The relation between stress and displacement in plane stress conditions is defined as:
\begin{equation}
    \begin{bmatrix}
        \sigma_{xx}\\ \sigma_{yy}\\ \tau_{xy}
    \end{bmatrix} = \frac{E}{1-\nu^2}
    \begin{bmatrix}
        1 & \nu & 0 \\ \nu & 1 & 0 \\ 0 & 0 & \frac{1-\nu}{2}
    \end{bmatrix} \times     
    \begin{bmatrix}
        \frac{\partial u}{\partial x} \\ \frac{\partial v}{\partial y} \\ \frac{\partial u}{\partial y} + \frac{\partial v}{\partial x}
    \end{bmatrix}.
\end{equation}

In this example, we model the loading conditions $f(\boldsymbol x)$ applied to the right edge of the plate as a Gaussian random field. Our goal for the task is to learn the mapping $\mathcal{T}: f(\boldsymbol x)\rightarrow u(\boldsymbol x)$ from the random boundary load to the horizontal displacement field. We use the modified dataset for this problem provided in \cite{ingebrand2025basis}.

\begin{figure}[ht]
    \centering
    \includegraphics[width=\textwidth]{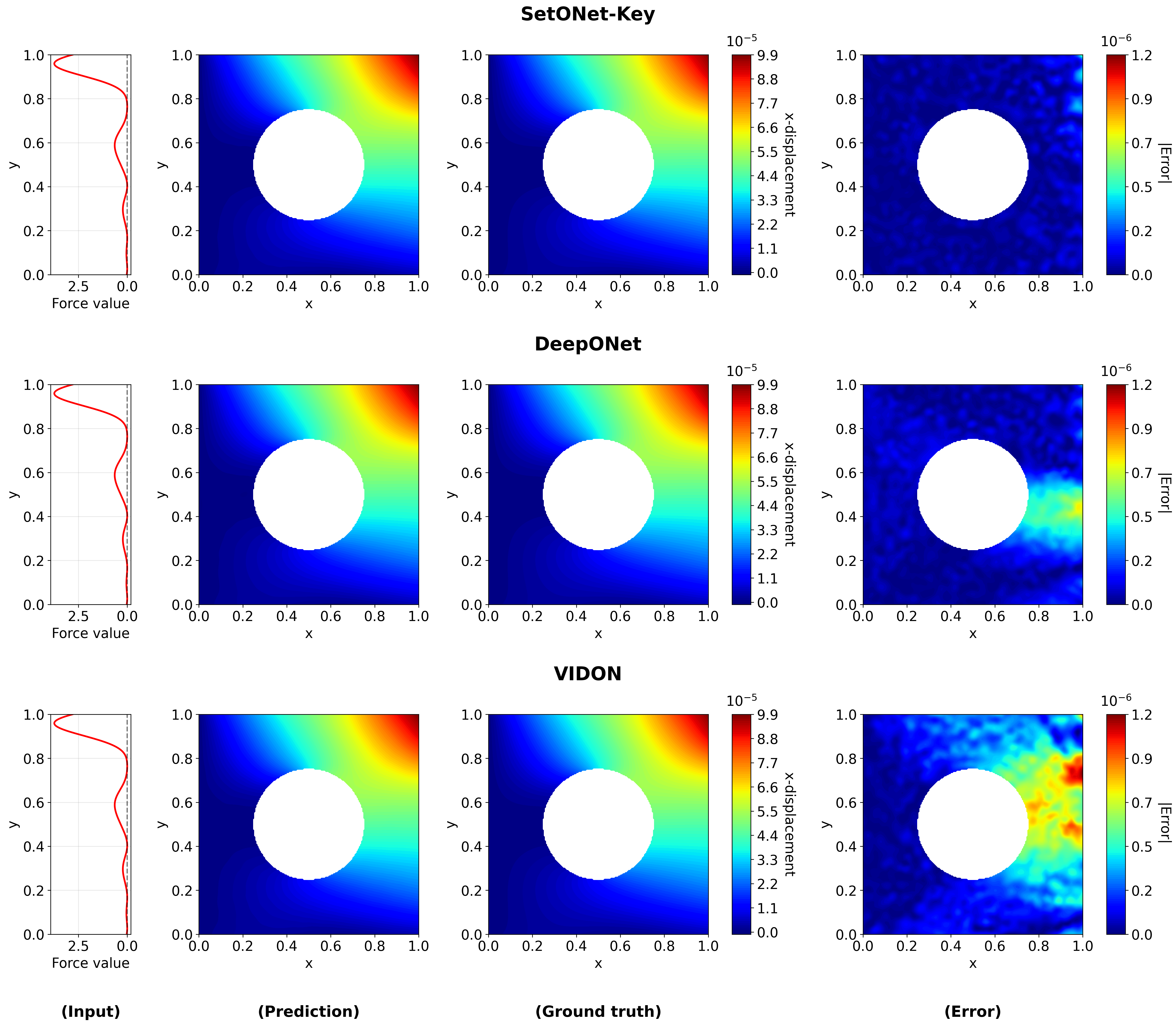}
    \caption{Qualitative comparison for the Elastic Plate benchmark. Rows correspond to SetONet-Key, DeepONet, and VIDON; columns show the input forcing function, predicted \(x\)-displacement field, ground-truth \(x\)-displacement field, and absolute error \((|\mathrm{pred}-\mathrm{true}|)\). The circular mask indicates the hole in the plate.}
    \label{fig:Elastic_SetONet_prediction}
\end{figure}

We display the validation curve in~\cref{fig:darcy_elastic_loss} and the relative error results in \cref{tab:setonet_results_pow}. Notably, for this example we observe that SetONet outperforms standard DeepONet and VIDON across all sensor configurations in both mean $\ell_2$ relative error and variance. It is also worth pointing out that, similar to other examples, while SetONet performs best under the fixed-sensor setting as expected, the performance drop is minimal when switching to variable sensor configurations or when tested with random sensor drop-off.
We display prediction results of all the trained moels in \cref{fig:Elastic_SetONet_prediction}. Here we only show the $x$-displacement. 

In summary, SetONet functions as a drop-in replacement for DeepONet: it achieves lower error under fixed sensor layouts and remains stable when layouts vary or sensors are dropped at evaluation. By operating directly on unordered location–value pairs, it removes the fixed-grid assumption and provides a more \emph{universal}, \emph{robust}, and \emph{general} formulation for operator learning—while preserving the same simple branch–trunk architecture and the same number of trainable parameters as the baseline. Notably, comparing to parameter-heavy architectures such as VIDON, SetONet achieves better performances while having significantly smaller models sizes, making it an appealing choice when training efficiency is needed.

Having established that SetONet achieves lower error than the baseline methods on standard PDE operator learning benchmarks, we now turn to problems where input functions are fundamentally unstructured. These inputs can be discontinuous or may not admit explicit formulas, making them impractical to approximate over a fixed mesh. In the following examples, we demonstrate that SetONet provides a natural and efficient way to handle such problems.

\subsection{Heat Conduction with Multiple Sources}
\label{sec: heat}

\begin{figure}[H]
    \centering
    \includegraphics[width=\textwidth]{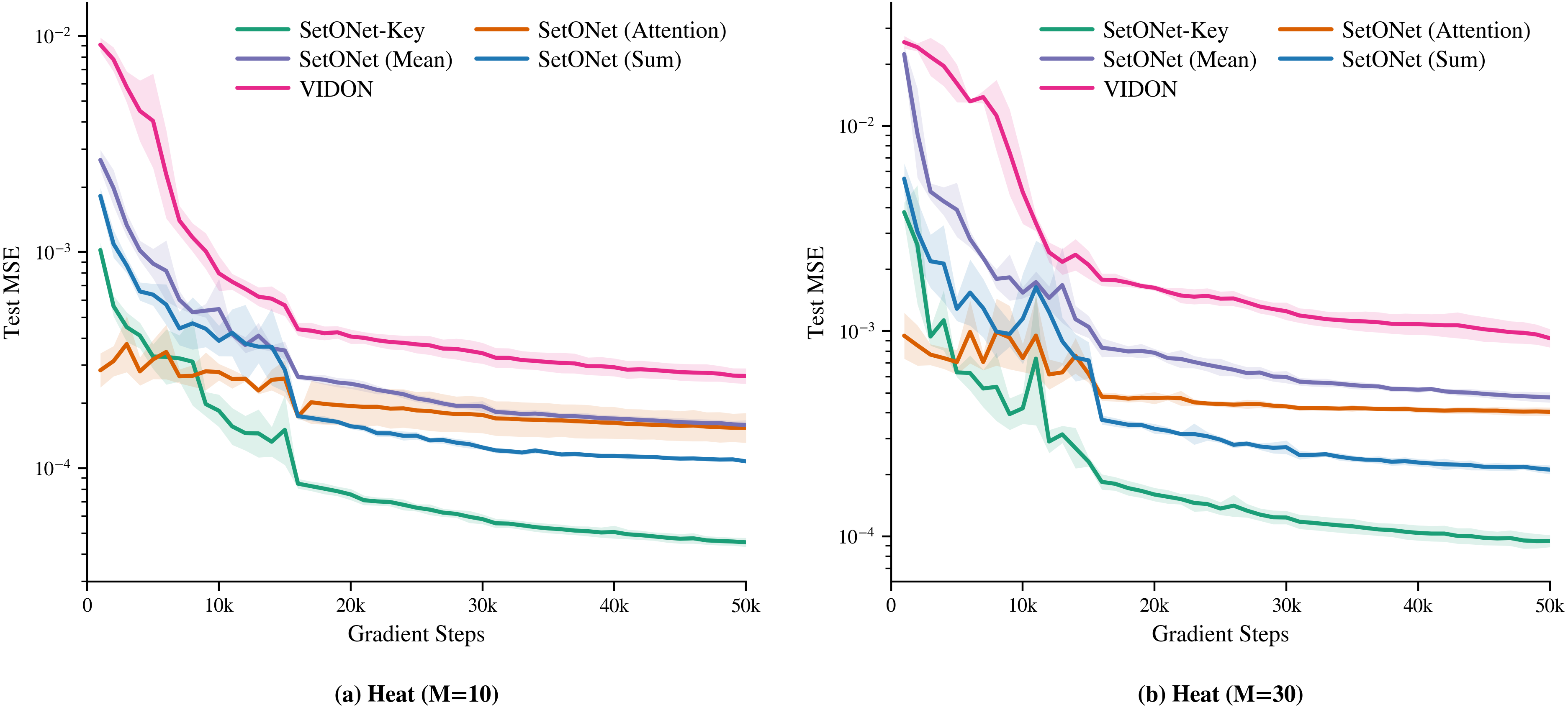}
    \caption{Test MSE histories for the Heat benchmark with (a) $M{=}10$ and (b) $M{=}30$ source points. Solid curves show the mean over five independent random initializations, and shaded bands indicate one standard deviation across seeds in $\log_{10}$-space. Curves are reported without temporal smoothing.}
    \label{fig:heat_loss_main}
\end{figure}

We consider the steady-state heat conduction problem on the unbounded plane 
$(x,y)\in\mathbb{R}^2$. Given $M$ point heat sources of strengths $s_i$ at 
positions $(x_i,y_i)$, the governing equation is
\begin{equation}
\Delta u(x,y) \;=\; \sum_{i=1}^M s_i \,\delta\big(x-x_i,\,y-y_i\big),
\end{equation}
where $u(x,y)$ denotes the temperature field and $\delta$ is the two-dimensional 
Dirac delta function. A corresponding analytic solution is given by the Green's function~\cite{courant2024methods,evans2022partial}
\begin{equation}
u(x,y) \;=\; \sum_{i=1}^M \frac{s_i}{2\pi} \,
\log\!\Big(\sqrt{(x-x_i)^2+(y-y_i)^2+\epsilon^2}\,\Big),
\end{equation}
where $\epsilon>0$ is a small softening parameter introduced to avoid 
singularities at the source locations.

\begin{table}[htbp]
\centering
\caption{Test relative $\ell_2$ error (mean $\pm$ std) comparing SetONet variants and VIDON on point-cloud problems. For Heat, $M$ denotes the number of heat-source points.}
\label{tab:pointcloud_results}
\setlength{\tabcolsep}{6pt}
\renewcommand{\arraystretch}{1.12}
\resizebox{\linewidth}{!}{%
\begin{tabular}{@{}ccccc@{}}
\toprule
\makecell{\textbf{Heat} \\ ($M{=}10$)}
& \makecell{\textbf{Heat} \\ ($M{=}30$)}
& \makecell{\textbf{Advection} \\ \textbf{Diffusion}}
& \textbf{Diffraction}
& \makecell{\textbf{Optimal} \\ \textbf{Transport}} \\
\midrule
\multicolumn{5}{c}{\textbf{SetONet-Key}} \\
\addlinespace[5pt]
\cellcolor{gray!20}\sci{1.35}{-2} $\pm$ \sci{3.75}{-4} & \cellcolor{gray!20}\sci{7.01}{-3} $\pm$ \sci{2.48}{-4} & \cellcolor{gray!20}\sci{4.47}{-2} $\pm$ \sci{8.01}{-4} & \cellcolor{gray!20}\sci{4.22}{-2} $\pm$ \sci{1.95}{-3} & \sci{5.49}{-2} $\pm$ \sci{1.56}{-3} \\
\addlinespace[10pt]
\multicolumn{5}{c}{\textbf{SetONet (Attention)}} \\
\addlinespace[5pt]
\sci{2.47}{-2} $\pm$ \sci{2.19}{-3} & \sci{1.43}{-2} $\pm$ \sci{4.38}{-4} & \sci{7.58}{-2} $\pm$ \sci{2.06}{-3} & \sci{8.23}{-2} $\pm$ \sci{3.16}{-3} & \cellcolor{gray!20}\sci{5.21}{-2} $\pm$ \sci{2.96}{-4} \\
\addlinespace[10pt]
\multicolumn{5}{c}{\textbf{SetONet (Mean)}} \\
\addlinespace[5pt]
\sci{2.51}{-2} $\pm$ \sci{6.21}{-4} & \sci{1.57}{-2} $\pm$ \sci{6.02}{-4} & \sci{7.34}{-2} $\pm$ \sci{1.20}{-3} & \sci{7.24}{-2} $\pm$ \sci{1.48}{-3} & \sci{5.39}{-2} $\pm$ \sci{1.62}{-4} \\
\addlinespace[10pt]
\multicolumn{5}{c}{\textbf{SetONet (Sum)}} \\
\addlinespace[5pt]
\sci{2.09}{-2} $\pm$ \sci{1.00}{-4} & \sci{1.03}{-2} $\pm$ \sci{2.70}{-4} & \sci{5.70}{-2} $\pm$ \sci{4.18}{-4} & \sci{6.60}{-2} $\pm$ \sci{1.30}{-3} & \sci{5.42}{-2} $\pm$ \sci{2.03}{-4} \\
\addlinespace[10pt]
\multicolumn{5}{c}{\textbf{VIDON}} \\
\addlinespace[5pt]
\sci{3.27}{-2} $\pm$ \sci{1.63}{-3} & \sci{2.20}{-2} $\pm$ \sci{1.18}{-3} & \sci{8.23}{-2} $\pm$ \sci{1.66}{-3} & \sci{1.01}{-1} $\pm$ \sci{4.47}{-3} & \sci{5.41}{-2} $\pm$ \sci{9.01}{-4} \\
\bottomrule
\end{tabular}%
}
\end{table}

\begin{figure}[ht]
    \centering
    \includegraphics[width=\textwidth]{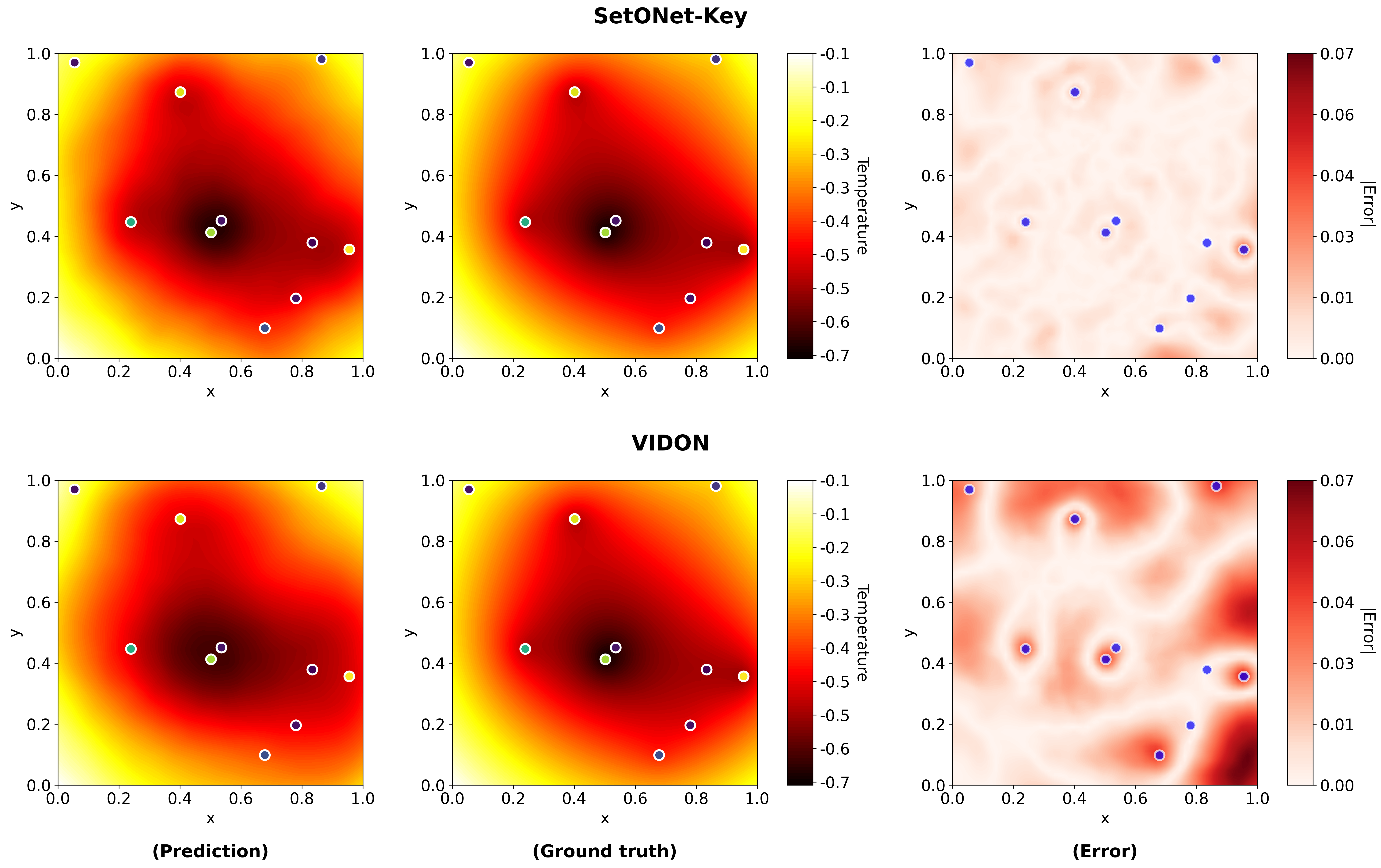}
    \caption{Qualitative comparison for the 2D Heat Conduction benchmark with 10 source points. Rows correspond to SetONet-Key and VIDON; columns show the predicted temperature field, ground-truth temperature field, and absolute error \((|\mathrm{pred}-\mathrm{true}|)\). Circular markers denote source locations; in the prediction and ground-truth panels, marker color indicates source power.}
    \label{fig:Heat_SetONet_prediction_N10}
\end{figure}

\begin{figure}[ht]
    \centering
    \includegraphics[width=\textwidth]{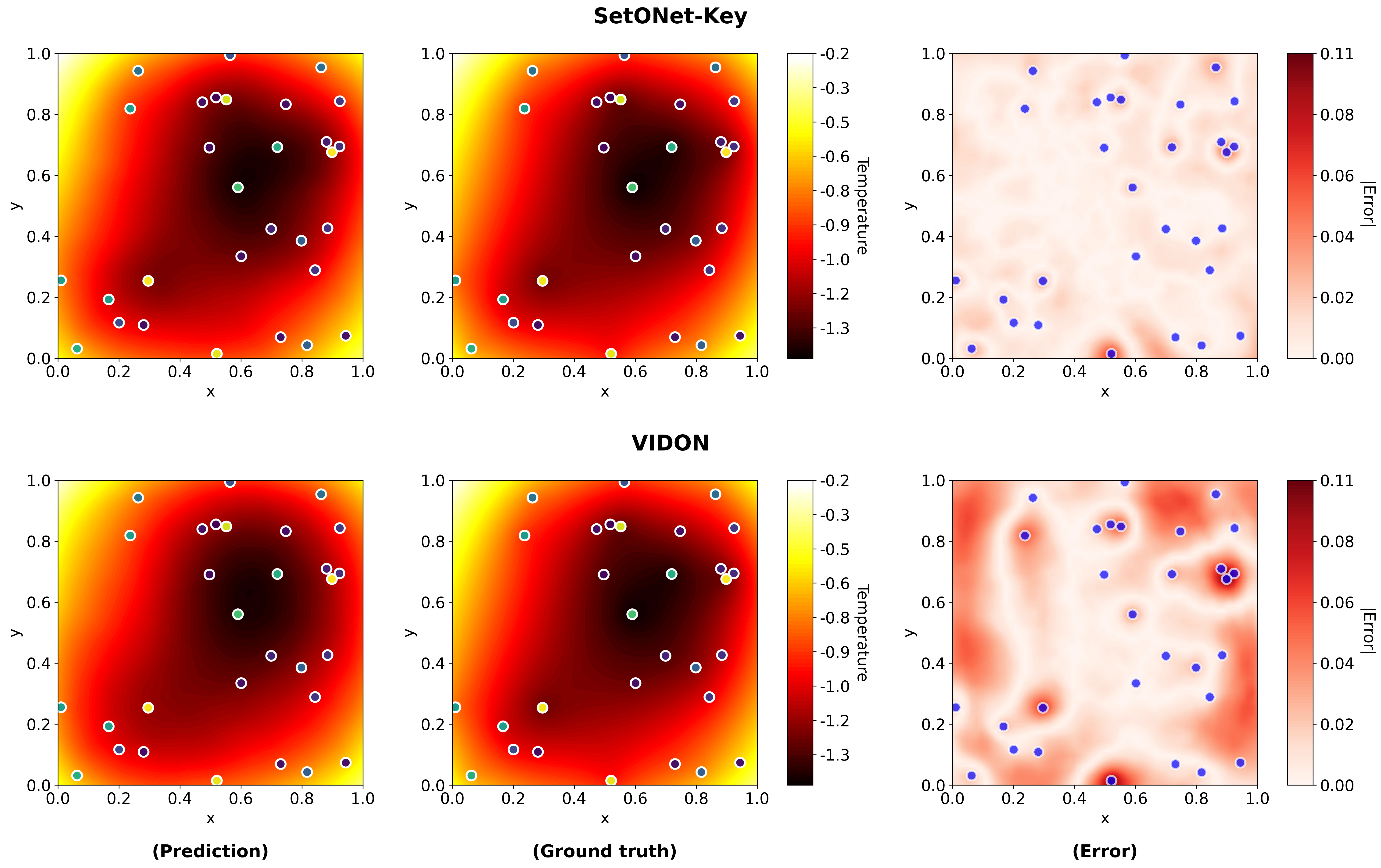}
    \caption{Qualitative comparison for the 2D Heat Conduction benchmark with 30 source points. Rows correspond to SetONet-Key and VIDON; columns show the predicted temperature field, ground-truth temperature field, and absolute error \((|\mathrm{pred}-\mathrm{true}|)\). Circular markers denote source locations; in the prediction and ground-truth panels, marker color indicates source power.}
    \label{fig:Heat_SetONet_prediction_N30}
\end{figure}

For this example, we consider two different subproblems, namely where we have $M = 10$ and $M = 30$ randomly sampled heat sources. Each source is also assigned different values, increasing the difficulty of the problems. We show the aggregated results in~\cref{tab:pointcloud_results}. For both subproblems we achieve $\ell_2$ relative error less than $2\%$. Notably, we find that increasing the number of sources in this case does not lead to an increase in error, suggesting that our proposed approach is scalable and not limited to small sample sizes. Our method is also robust, with variance between different random trials at the $10^{-4}$ level. Lastly, we present prediction results and corresponding ground truth solutions in \cref{fig:Heat_SetONet_prediction_N10} and \cref{fig:Heat_SetONet_prediction_N30}. In both cases, the predictions match the true solutions closely. During training, we evaluate the solution on an adaptive grid to accurately capture the regions with sharp gradients.

\subsection{Advection-Diffusion Equation for Modeling Chemical Concentration}
We consider the steady two-dimensional advection--diffusion equation with
constant diffusivity $d > 0$ and uniform velocity field 
$\mathbf v \in \mathbb R^2$. For point sources of strength $s_i$ located 
at positions $\boldsymbol{x}_i = (x_i, y_i)$, the governing PDE is
\begin{equation}
- d \,\Delta u(\boldsymbol{x}) + \mathbf v \cdot \nabla u(\boldsymbol{x}) 
= \sum_{i=1}^M s_i \,\delta(\boldsymbol{x} - \boldsymbol{x}_i), 
\quad \boldsymbol{x} \in \mathbb R^2.
\end{equation}
On the unbounded plane, the fundamental solution that decays upstream 
and laterally is given by
\begin{equation}
g(\boldsymbol{r} ) = \frac{1}{2\pi d}
\exp\!\left(\tfrac{\mathbf v \cdot \boldsymbol{r}}{2d}\right) 
K_0\!\left(\tfrac{\|\mathbf v\| \,\|\boldsymbol{r}\|}{2d}\right),
\end{equation}
where $\boldsymbol{r} = \boldsymbol{x} - \boldsymbol{x}_i$ and $K_0$ denotes the 
modified Bessel function of the second kind. By linearity, the total 
concentration field is
\begin{equation}
u(\boldsymbol{x}) = \sum_{i=1}^M s_i \, g(\boldsymbol{x} - \boldsymbol{x}_i).
\end{equation}
In the limit $\mathbf v \to 0$, this expression reduces (up to an additive 
constant) to the two-dimensional diffusion Green’s function 
$-\tfrac{1}{2\pi d}\log\|\boldsymbol{r}\|$ used in \cref{sec: heat}.
This formula provides a closed-form expression for the concentration field at any point in space given multiple chemical sources.

\begin{figure}[ht]
  \centering
  \includegraphics[width=\textwidth]{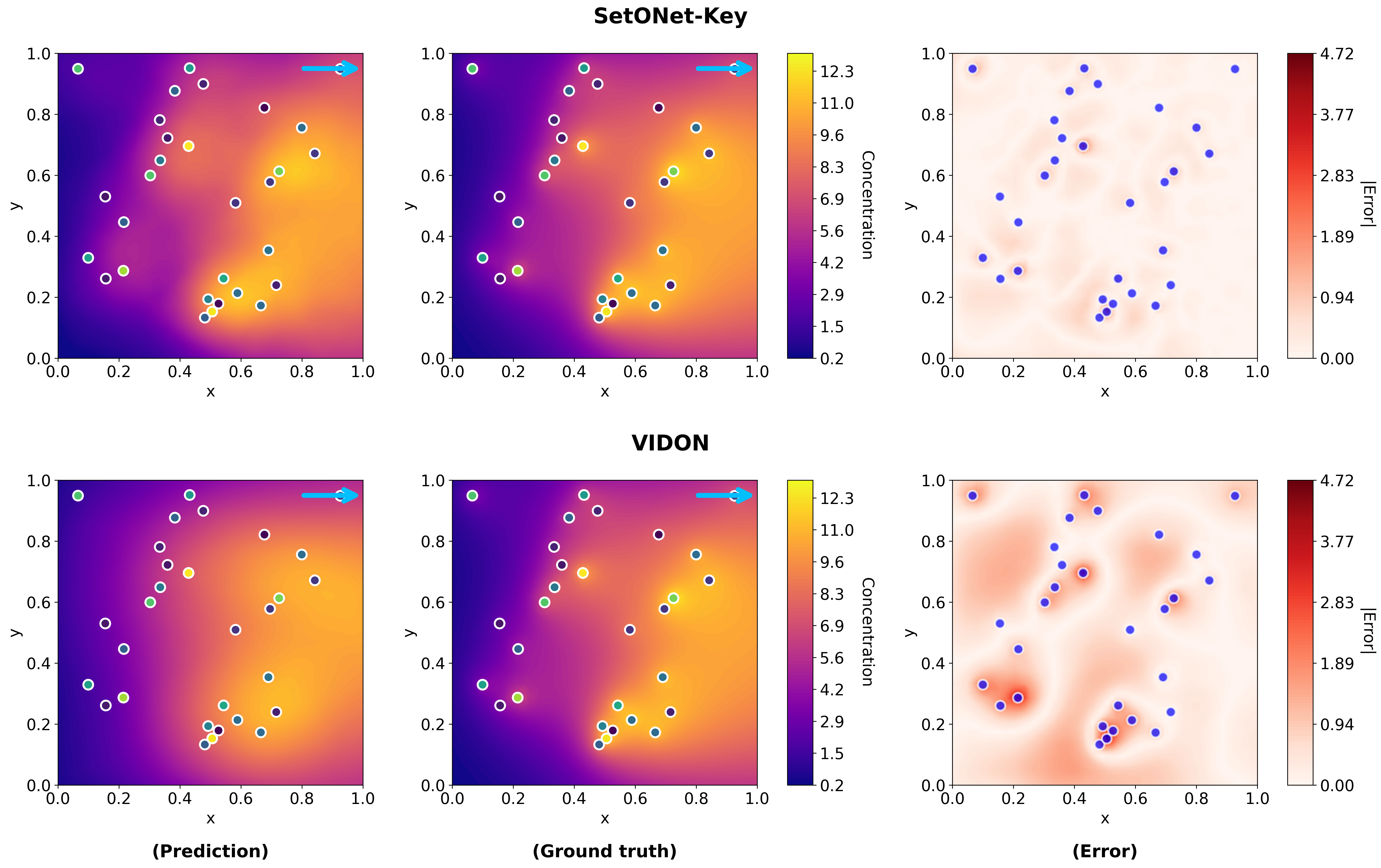}
  \caption{Qualitative comparison for the 2D Advection--Diffusion benchmark. Rows correspond to SetONet-Key and VIDON; columns show the predicted concentration field, the ground-truth concentration field, and the absolute error \((|\mathrm{pred}-\mathrm{true}|)\). Circular markers denote source locations; in the prediction and ground-truth panels, marker color indicates source emission rate. The cyan arrow indicates the advection direction.}
  \label{fig:Concentration_SetONet_prediction}
\end{figure}

\begin{figure}[ht]
    \centering
    \includegraphics[width=\textwidth]{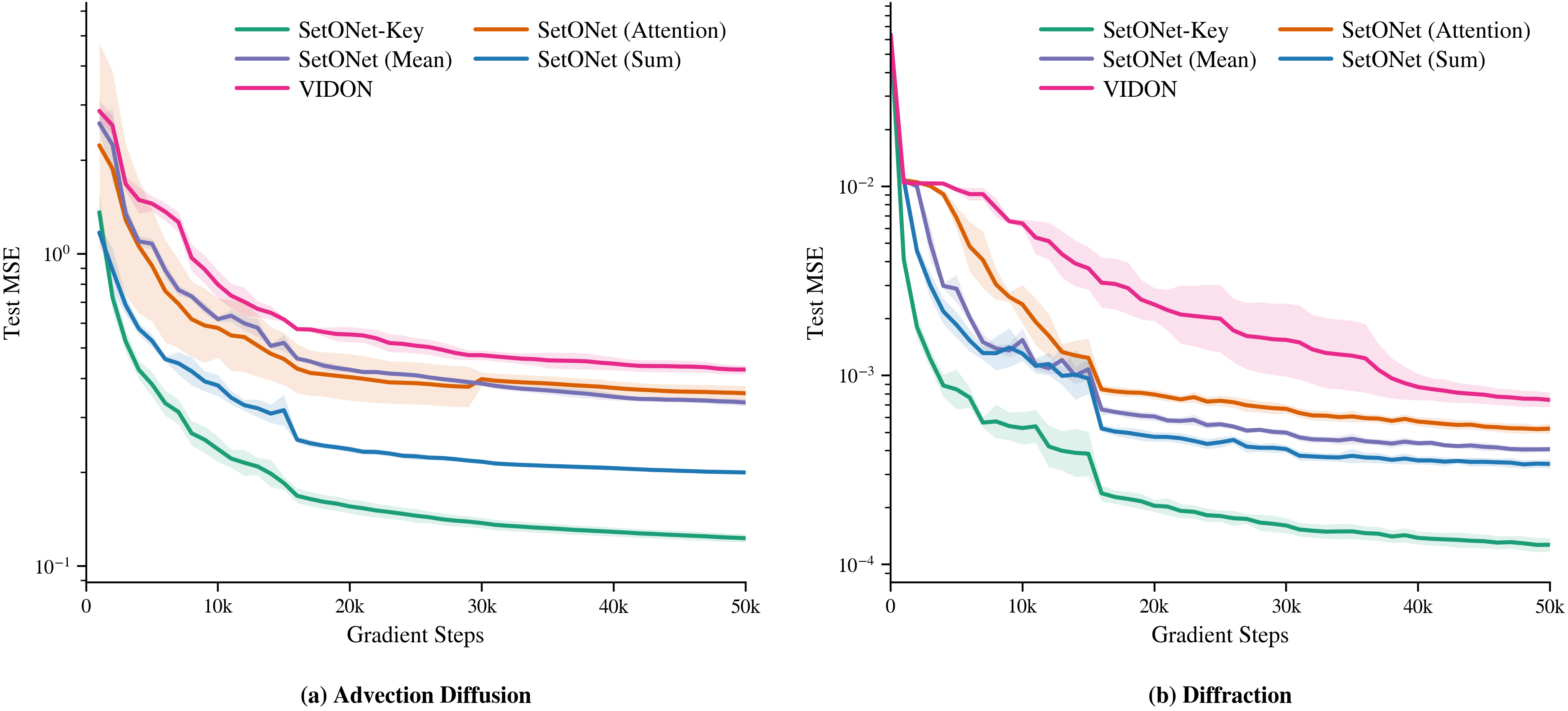}
    \caption{Test MSE histories on two point-cloud operator-learning benchmarks: (a) Advection Diffusion and (b) Diffraction. Solid curves show the mean over five independent random initializations, and shaded bands indicate one standard deviation across seeds in $\log_{10}$-space. Curves are reported without temporal smoothing.}
    \label{fig:concentration_diffraction_loss}
\end{figure}

For this benchmark we use \(M{=}30\) source points with log-uniform emission rates under a uniform wind \(\mathbf{v}{=} \begin{bmatrix}1,0
\end{bmatrix}^\top \). Aggregated results in \cref{tab:pointcloud_results} show a relative \(\ell_2\) error of $4.49\%$ with low variance at $0.08\%$. Qualitatively, SetONet reproduces the anisotropic plume structure—downwind elongation, lateral decay, and upstream attenuation—yielding close agreement with the ground truth, see~(\cref{fig:Concentration_SetONet_prediction}). These results indicate that SetONet handles point-source superposition and advection-induced anisotropy reliably within a single, end-to-end model. 
Similar to the heat conduction example, we use an adaptive grid to evaluate the solutions in the regions with sharp gradients to ensure accuracy.

\subsection{Phase-Screen Diffraction}
\label{subsec:results-diffraction}
We introduce a two-dimensional \emph{Phase-Screen Diffraction} benchmark in which an unordered set of localized phase perturbations defines the initial condition of a complex wavefield, and the learning target is the propagated field at a fixed time.

We consider the periodic domain $\Omega := [0,1)^2 \cong \mathbb{T}^2$ and a fixed propagation time $t_0>0$. Each input to the operator is modeled as the set of $M$ Gaussian phase perturbations
\[
\mathcal{X} := \big\{(\boldsymbol{x}_i,\alpha_i,\ell_i)\big\}_{i=1}^{M},
\qquad
\boldsymbol{x}_i \in \Omega,\ \alpha_i\in\mathbb{R},\ \ell_i>0 .
\]
Because the domain is periodic, these collectively define a phase screen
\begin{equation}
\Phi(\boldsymbol{x};\mathcal{X})
=
\sum_{i=1}^{M}
\alpha_i
\exp\!\left(
-\frac{\|\boldsymbol{x}-\boldsymbol{x}_i\|_{\mathbb{T}}^2}{2\ell_i^2}
\right).
\label{eq:phase_screen}
\end{equation}
where $\|\boldsymbol{x}-\boldsymbol{x}_i\|_{\mathbb{T}}^2 := \sum_{j=1}^{2} \operatorname{wrap}(x_j-x_{i,j})^2$ and $\operatorname{wrap}(s):=s-\Big\lfloor s+\tfrac{1}{2}\Big\rfloor \in \big[-\tfrac{1}{2},\tfrac{1}{2}\big)$.

The complex initial wavefield is defined as
\begin{equation}
u(\boldsymbol{x}, 0;\mathcal{X}) = A(\boldsymbol{x})\,
\exp\!\big(i\,\Phi(\boldsymbol{x};\mathcal{X})\big),
\label{eq:initial_field}
\end{equation}
where
\[
A(\boldsymbol{x})
=
\exp\!\left(
-\frac{\|\boldsymbol{x}-(\tfrac12,\tfrac12)^\top\|_2^2}{2\sigma_{\mathrm{env}}^2}
\right),
\]
and $\sigma_{\mathrm{env}}>0$ is the fixed Gaussian envelope width.

The field evolves according to the free Schr\"odinger equation
\begin{equation}
i\,\partial_t u(\boldsymbol{x},t)
=
-\tfrac12 \Delta u(\boldsymbol{x},t),
\qquad
(\boldsymbol{x},t)\in\Omega\times(0,t_0],
\label{eq:schrodinger_diffraction}
\end{equation}
with periodic boundary conditions in both spatial directions and initial condition
$u(\boldsymbol{x},0)=u(\boldsymbol{x}, 0;\mathcal{X})$.
We define the operator learning objective as the mapping $\mathcal{T}:
\mathcal{X} \;\longmapsto\; u(\cdot,t_0)$.

On $\Omega=\mathbb{T}^2$, the Schr\"odinger equation diagonalizes in the Fourier series basis. Let $\mathcal{F}$ denote the Fourier series transform on $\Omega$ with wavevectors $\boldsymbol{\xi}\in(2\pi\mathbb{Z})^2$.
The solution at time $t_0$ is obtained spectrally as
\begin{equation}
u(\cdot,t_0)
=
\mathcal{F}^{-1}\!\left[
\exp\!\left(-\tfrac{i}{2}\|\boldsymbol{\xi}\|_2^2 t_0\right)
\mathcal{F}\big[u(\cdot, 0;\mathcal{X})\big]
\right],
\label{eq:fourier_propagation}
\end{equation}
which we evaluate numerically using a Fourier spectral method (FFT) on a uniform grid.

For data generation, we evaluate the solution on a uniform $128 \times 128$ grid on $\Omega$ at fixed propagation time $t_0=0.1$.
Each sample contains $M=10$ phase bumps with
$\boldsymbol{x}_i\sim\mathrm{Uniform}(\Omega)$,
$\alpha_i\sim\mathrm{Uniform}(-\pi/2,\pi/2)$,
and fixed width $\ell_i=0.4$.
We generate a total of $20{,}000$ training samples and $1{,}000$ test samples. The complex field outputs are stored as two separate channels corresponding to the real and imaginary parts.

We note that, despite similarities in notation, this benchmark differs fundamentally from the point-source benchmarks (Heat and Advection--Diffusion), where the solution is a linear superposition of Green's function responses from individual sources. Here the perturbations combine nonlinearly through the phase term $\exp\!\big(i\,\Phi(\boldsymbol{x};\mathcal{X})\big)$ before propagation, making the resulting solution operator fundamentally different and much more challenging.

Quantitative results for this benchmark are reported in~\cref{tab:pointcloud_results}, where all SetONet variants outperform VIDON and exhibit lower variance across seeded runs. Qualitative predictions are shown in~\cref{fig:Diffraction_SetONet_prediction}, where we visualize both the real and imaginary parts of the complex field.

\begin{figure}[ht]
    \centering
    \includegraphics[width=\textwidth]{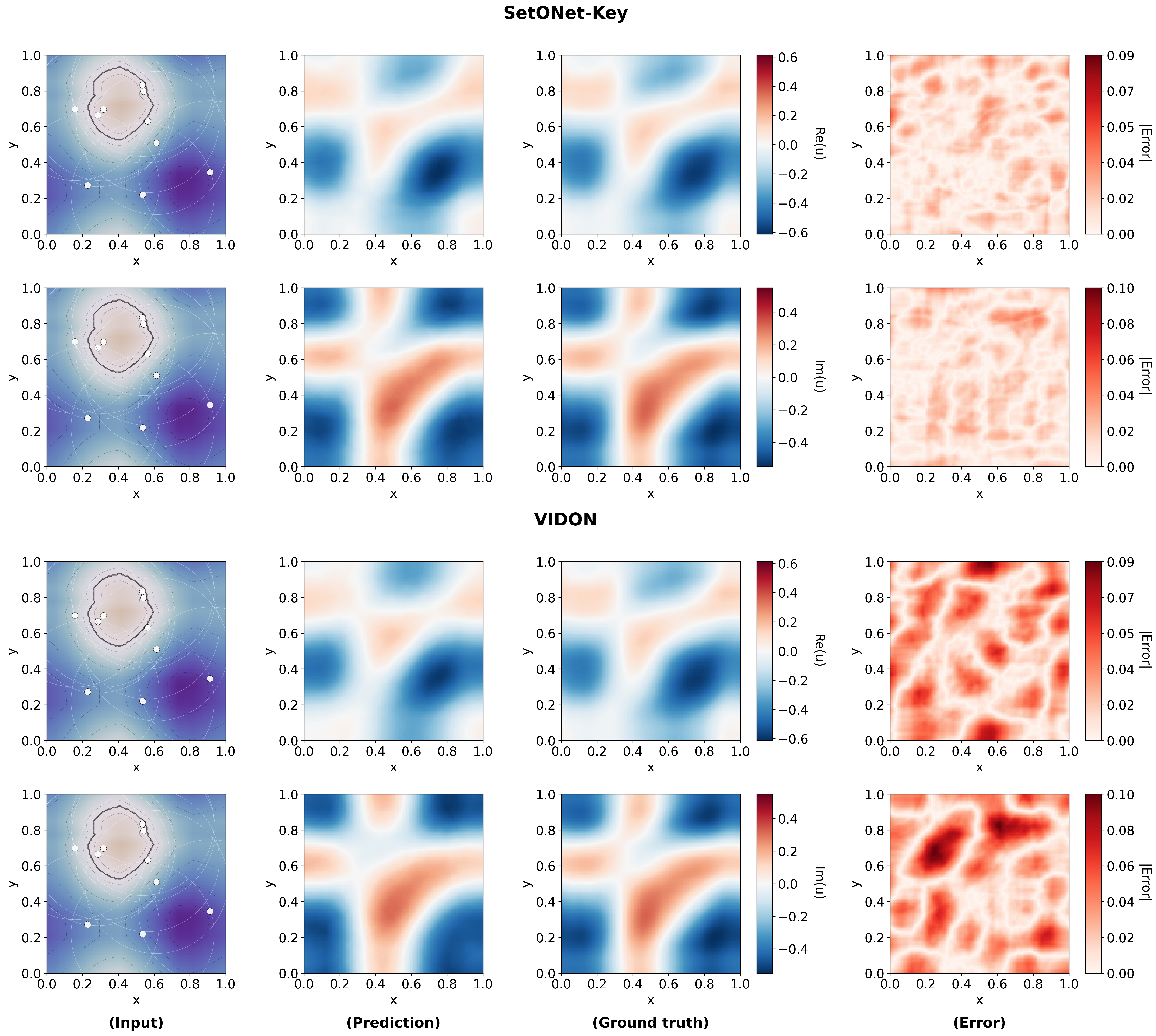}
    \caption{Qualitative comparison for the 2D Phase-Screen Diffraction benchmark. From top to bottom, rows correspond to SetONet-Key (\(\mathrm{Re}(u)\)), SetONet-Key (\(\mathrm{Im}(u)\)), VIDON (\(\mathrm{Re}(u)\)), and VIDON (\(\mathrm{Im}(u)\)). Columns show the input phase screen, predicted field component, ground-truth component, and absolute error \((|\mathrm{pred}-\mathrm{true}|)\).}
    \label{fig:Diffraction_SetONet_prediction}
\end{figure}

\subsection{Optimal Transport Problem}
To further demonstrate the versatility of our proposed approach, we consider the following optimal transport problem with quadratic cost~\cite{santambrogio2015optimal}. Given an initial density 
$\rho_0$ and a target density $\rho_1$ defined on $\Omega = [-5, 5]^2$ with equal total mass, the \emph{Monge}  formulation seeks a measurable map $u:\Omega\to\Omega$ minimizing
\begin{equation}
\int_\Omega \|\boldsymbol{x} - u(\boldsymbol{x})\|_2^2 \, \rho_0(\boldsymbol{x}) \, d\boldsymbol{x},
\end{equation}
subject to the pushforward constraint $u_{\#} \rho_0 = \rho_1$. To solve the problem more easily, we also consider the \emph{Kantorovich} formulation, which reads
\begin{gather}
\min_{\pi \ge 0} \int_{\Omega\times\Omega} \|\boldsymbol{x} - \boldsymbol{y}\|_2^2 \,\pi(\boldsymbol{x},\boldsymbol{y})\,d\boldsymbol{x}\,d\boldsymbol{y},\\
\text{subject to} \quad 
\int_{\Omega} \pi(\boldsymbol{x},\boldsymbol{y})\,d\boldsymbol{y} = \rho_0(\boldsymbol{x}), 
\quad 
\int_{\Omega} \pi(\boldsymbol{x},\boldsymbol{y})\,d\boldsymbol{x} = \rho_1(\boldsymbol{y}).
\end{gather}
This allows us to solve the discrete Kantorovich form with entropic regularization using a Sinkhorn algorithm~\cite{cuturi2013sinkhorn}. The resulting optimal coupling $P$ can then be converted to a barycentric map for $u$. For any sample point from $\rho_0$ the point-wise transport map can be obtained through interpolation.

The operator learning problem is formulated as finding the mapping $\mathcal{T} :\rho_0(\boldsymbol x)\rightarrow u(\boldsymbol{x})$; in practice the true density function $\rho_0(\boldsymbol{x})$ is not always accessible, and our novel approach allows it to be replaced by drawn samples $\{\boldsymbol{x}_i\}_{i=1}^{M}\sim\rho_0$, which is not directly supported by standard fixed-grid operator learning methods. The source density $\rho_0$ is taken as a mixture of two Gaussians with means sampled uniformly from $[-2,2]^2$ and diagonal covariances with entries in $[0.1,1.0]$. The target density $\rho_1$ is fixed as the centered Gaussian $\mathcal{N}(\boldsymbol{0},\,0.5\,\mathbf{I}_2)$. We generate $20{,}000$ training and $1{,}000$ test examples. To form the model input, we randomly draw $512$ i.i.d.\ points from the source $\rho_0$ and represent them with unit weights. Supervision is provided at $1024$ independent query points through the displacement vectors $u(\boldsymbol{y})-\boldsymbol{y}$. The transport map is solved over an $80{\times}80$ uniform grid over the domain.

\begin{figure}[ht]
  \centering
  \includegraphics[width=\textwidth]{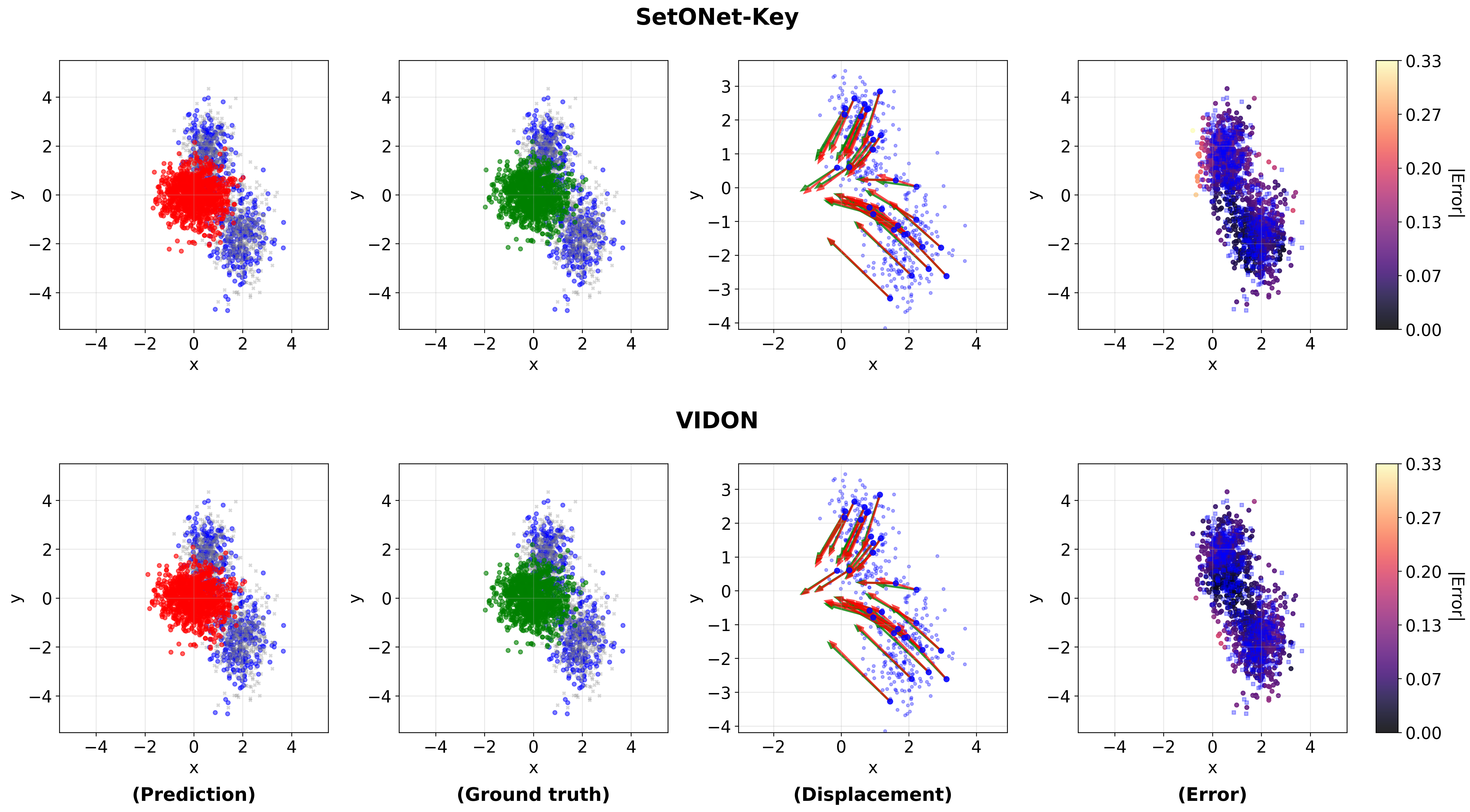}
  \caption{Qualitative comparison for the 2D optimal transport benchmark. Rows correspond to the compared models; columns show prediction, ground truth, displacement, and error. In the prediction and ground-truth columns, source points are shown in blue, independent query points in gray (\(x\)-markers), and transported query locations in red (prediction) or green (ground truth). The displacement column overlays predicted (red) and ground-truth (green) displacement vectors on sampled source points (blue). The error column reports the pointwise displacement error magnitude \(\|\mathrm{pred}-\mathrm{true}\|\) at query points.}
  \label{fig:Transport_SetONet_prediction}
\end{figure}

Operator learning over different density functions, while common in various fields~\cite{kim2024gaussian,liao2024score}, remains understudied. One key challenge is that in many practical applications, only sample-based empirical measurements are available, making it difficult to estimate the density function over a fixed discretization. SetONet offers an alternative that directly operates on discrete samples, modeling the underlying density function implicitly and enabling efficient solving of such problems.

We present the numerical results of this problem in \cref{tab:pointcloud_results}. Here we note that while the $\ell_2$ relative error is slightly higher than in other examples, this is due to validating the transport map over a fixed domain. In practice, the solution is only important on part of the domain, depending on the underlying density. A more practical visualization is presented in \cref{fig:Transport_SetONet_prediction}, where the transport map closely matches the ground truth over point samples, demonstrating the accuracy of the learned operator. 
Furthermore, this example highlights that our proposed approach can handle point cloud inputs of various sizes and is not limited to small sample sets, making it suitable for problems at scale.

\section{Summary and Conclusion}
\label{sec:conclusion}

We introduced SetONet, a set-based operator network that removes the fixed-sensor assumption underlying standard DeepONet formulations by representing each input function as an unordered set of location--value pairs. By replacing the conventional fixed-length branch input with a permutation-invariant set encoder while preserving the branch--trunk synthesis mechanism, SetONet accommodates variable sampling patterns, missing observations, and inherently unstructured inputs within a single end-to-end architecture. Within this framework, SetONet-Key introduces geometry-aware token-based aggregation while retaining the lightweight DeepONet-style construction.

Across the classical benchmarks, broadening the branch interface from fixed sensor vectors to sets did not compromise accuracy on standard DeepONet-style tasks. In parameter-matched fixed-layout studies, SetONet-Key delivered modest but consistent improvements on the simple 1D derivative and integral operators, while the gains were more pronounced on the PDE benchmarks Darcy 1D and Elastic Plate, where the relative $\ell_2$ error was reduced by about $1.4\times$ to more than $4\times$. Under variable-layout and sensor drop-off regimes, the set-based models remained accurate and stable: SetONet-Key was the strongest variant on three of the four benchmarks, while the attention-pooled SetONet variant performed best on Derivative. Relative to the larger VIDON baseline, the strongest SetONet variants achieved substantially lower relative $\ell_2$ error in these more demanding variable-input settings, with gains commonly around $2\times$--$3\times$. Overall, these results show that the set-based formulation retains the favorable fixed-layout behavior of DeepONet while remaining robust under both the \emph{Variable} and \emph{Drop-off} settings.

Beyond the classical benchmarks, the experiments also show that the set-based formulation is effective on problems with inherently unstructured point-cloud inputs. On \emph{Heat Conduction}, \emph{Advection--Diffusion}, and \emph{Phase-Screen Diffraction}, SetONet-Key consistently outperformed the larger VIDON baseline, typically by about $1.8\times$ to $3\times$ in relative $\ell_2$ error, while also exhibiting low variance across seeded trials. On \emph{Optimal Transport}, all models performed similarly within the reported uncertainty, and no clear advantage of one architecture over another was observed. In these examples, SetONet operates directly on point sources, phase perturbations, and density samples without rasterization or multi-stage preprocessing.

More broadly, SetONet is motivated by a practical fragmentation in operator learning: inputs may be presented on regular grids, at irregular sensor locations, or as unordered observations, and these different representations often require distinct preprocessing pipelines or architectural choices. A unified branch--trunk interface makes it possible to treat these cases within a common modeling pipeline, even when input representations vary across experiments or datasets. This is particularly relevant in experimental domains such as environmental monitoring, particle tracking, and mobile sensor platforms, where measurements arise naturally as unordered samples rather than as fixed arrays.

Several directions follow naturally from the present study. One natural extension is the integration of physics-informed training objectives, enabling SetONet to learn operators in problems where measurements are sparse or hard to obtain. The set-based formulation also supports conditional learning: augmenting the available location--value pairs with additional parameters such as material properties, timestamps, or sensor quality flags would enable multi-operator training and prediction. Finally, a theoretical analysis of set-to-field operator learning, including approximation rates and generalization bounds, would help clarify the regimes in which these architectures are most effective.

\section*{Acknowledgments}
This work was funded in part by NSF 2339678, 2321040 and 2438193.  Any opinions, findings, conclusions, or recommendations expressed in this material are those of the author(s) and do not necessarily reflect the views of the funding organizations. We would like to express our heartfelt gratitude to Dr. Somdatta Goswami at Johns Hopkins University for her insightful discussions that greatly enriched this work.

\bibliographystyle{elsarticle-num}
\bibliography{bibliography}
\newpage
\appendix
\section{Model Configuration and Training Protocol}
\label{app:config}

This appendix summarizes the effective configurations used to produce the results in \cref{sec:results}. Lower-level implementation details remain in the released code and benchmark configuration files.

\subsection{Unified Architecture}
\label{app:arch}
All compared models follow a branch--trunk decomposition. The SetONet variants use the synthesis rule of \cref{eq:setonet_output_simplified} with a learnable output bias. VIDON uses the same branch--trunk contraction and retains the additional trunk basis term $\tau_0$ from the original VIDON formulation. The benchmark DeepONet baseline uses the standard branch--trunk dot product, with the branch consuming only the fixed vector of sensor values. Hyperparameters appear below.

\subsection{SetONet Variants}
\label{app:shared-setonet}
The reported SetONet results use four branch variants: SetONet (Attention), SetONet (Mean), SetONet (Sum), and SetONet-Key.
\begin{itemize}
  \item \textit{Shared trunk and positional encoding:} all SetONet variants use sinusoidal positional encoding with total dimension $d_{\mathrm{PE}}{=}64$, applied per coordinate and concatenated. All trunk networks are four-layer ReLU MLPs of width $256$.
  \item \textit{SetONet (Attention/Mean/Sum):} these three variants share the same Value Net $v_{\boldsymbol{\theta}}$ and readout map $\rho$; only the permutation-invariant aggregation changes. The shared Value Net is a ReLU MLP with hidden width $256$ and output size $32$, and the readout map $\rho$ uses hidden width $300$ on \emph{Derivative}, \emph{Integral}, \emph{Darcy 1D}, and \emph{Elastic Plate}, and hidden width $256$ on \emph{Heat}, \emph{Advection--Diffusion}, \emph{Phase-Screen Diffraction}, and \emph{Optimal Transport}. Attention pooling uses one learnable pooling token and four heads.
  \item \textit{SetONet-Key:} the reported SetONet-Key results use the separate key and value pathways of \cref{sec:setonet_architecture}, with $d_k{=}64$ and $d_v{=}32$. In implementation, the row-wise map $\mathcal{A}_{\mathrm{key}}$ is instantiated as
  \[
  \mathcal{A}_{\mathrm{key}}(\mathbf{S})_{k,i}
  =
  \frac{w_i\,a_{\mathrm{mix}}(S_{k,i})}{\sum_{j=1}^{M} w_j},
  \]
  where $w_i \ge 0$ denotes the sensor weight and $a_{\mathrm{mix}}$ is the row-wise score transform of \cref{sec:setonet_architecture,thm:setonet_key_uat}. In the reported runs, $a_{\mathrm{mix}}$ is implemented by $\mathrm{softplus}$ on the 1D benchmark families and by $\tanh$ on the 2D benchmark families, and the hidden width used in the Key Net $k_{\boldsymbol{\theta}}$ and row-wise readout $\rho_{\mathrm{tok}}$ is $300$ on \emph{Derivative}, $200$ on \emph{Integral}, \emph{Darcy 1D}, and \emph{Elastic Plate}, and $256$ on \emph{Heat}, \emph{Advection--Diffusion}, \emph{Phase-Screen Diffraction}, and \emph{Optimal Transport}. In 1D, $w_i$ are geometry-derived trapezoidal cell weights; in higher dimensions, uniform sensor weights are used. For the \emph{Derivative} family, the value map is augmented by sensor coordinates; otherwise the default value pathway depends only on sensor values.
  \item \textit{Family-level settings:} the latent size is $p{=}32$ for \emph{Derivative}, \emph{Integral}, and \emph{Darcy 1D}, and $p{=}128$ for \emph{Elastic Plate}, \emph{Heat}, \emph{Advection--Diffusion}, \emph{Phase-Screen Diffraction}, and \emph{Optimal Transport}. The positional-encoding scale is $\mathrm{PE}_{\max}=0.1$ on \emph{Derivative}, \emph{Integral}, \emph{Darcy 1D}, \emph{Elastic Plate}, and \emph{Optimal Transport}, and $\mathrm{PE}_{\max}=0.01$ on \emph{Heat}, \emph{Advection--Diffusion}, and \emph{Phase-Screen Diffraction}.
\end{itemize}

\subsection{VIDON Baseline}
\label{app:vidon}
VIDON is included on all benchmarks reported in \cref{sec:results}.
\begin{itemize}
  \item \textit{Architecture:} we use the reference multi-head VIDON architecture with $H{=}4$, encoder dimension $d_{\mathrm{enc}}{=}40$, and head output size $64$.
  \item \textit{Family-level settings:} the latent size $p$ matches the corresponding SetONet family, and the trunk includes the additional basis term associated with $\tau_0$.
\end{itemize}

\subsection{DeepONet Baseline}
\label{app:deeponet}
DeepONet is included on the fixed-sensor benchmark families for which a fixed-length branch input is meaningful, namely \emph{Derivative}, \emph{Integral}, \emph{Darcy 1D}, and \emph{Elastic Plate}.
\begin{itemize}
  \item \textit{Branch and trunk:} the branch consumes only the fixed-length sensor-value vector $\bigl(g(\boldsymbol{x}_i)\bigr)_{i=1}^M$; no sensor coordinates or positional encoding are used in the branch. The trunk acts on the raw query coordinates.
  \item \textit{Family-level settings:} the latent size $p$ matches the corresponding SetONet family. DeepONet is not used on the point-cloud benchmark family.
\end{itemize}

\subsection{Optimization and Training}
\label{app:opt}
\begin{itemize}
  \item \textit{Objective and metrics:} all models are trained with mean-squared error (MSE). We report test MSE and relative $\ell_2$ error.
  \item \textit{Optimizer:} Adam with learning rate $5\times 10^{-4}$, no weight decay, and standard gradient clipping.
  \item \textit{Training schedules:} the structured benchmark family (\emph{Derivative}, \emph{Integral}, \emph{Darcy 1D}, \emph{Elastic Plate}) uses $125$k optimization steps with batch size $64$ and effective learning-rate decays by factors $0.2$ and $0.5$ at $25$k and $75$k steps. The point-cloud benchmark family (\emph{Heat} with $M{=}10$ and $M{=}30$ source points, \emph{Advection--Diffusion}, \emph{Phase-Screen Diffraction}, \emph{Optimal Transport}) uses $50$k steps with batch size $32$ and effective decays by factors $0.2$ and $0.5$ at $15$k and $30$k steps. Some training scripts expose longer milestone lists, but only the milestones listed here are reached in the reported runs.
  \item \textit{Averaging and model selection:} all reported summary statistics are computed over five seeds ($0,1,2,3,4$). No early stopping or best-checkpoint selection is used.
\end{itemize}

\section{Dataset Details}
\label{app:datasets}

This section specifies domains, sampling protocols, PDE solvers (where applicable), and splits for all benchmarks in \cref{sec:results}. Sensor and query layouts are given below for each benchmark. 

\paragraph{Derivative and Integration Operators}
On $x\in[-1,1]$, each sample is
\[
f(x)=ax^3+bx^2+cx+e\sin x,
\qquad
a,b,c,e\sim\mathrm{Uniform}([-0.1,0.1]),
\]
with zero integration constant. For \emph{Derivative}, inputs are $M{=}100$ sensor evaluations $f(x_j)$ and targets are $f'(y_k)$ at $N_q{=}200$ uniformly spaced queries $y_k$. For the \emph{Integration} task, inputs are $f'(x_j)$ at the same sensor locations and targets are $f(y_k)$. In the fixed-layout setting, one seeded sorted random sensor set is used for all samples; in the variable-layout setting, one shared sorted random sensor set is resampled per batch. Training data are generated on the fly, and evaluation averages over $960$ independently generated test functions.

\paragraph{1D Darcy Flow}
We use the same 1D Darcy problem as in \cref{subsec:results-1ddarcy}. The forcing term is drawn as
$f(x) \sim \mathrm{GP}(0, k(x,x'))$ with squared-exponential kernel
$k(x,x')=\sigma^2 \exp\!\big(-\tfrac{(x-x')^2}{2\ell_x^2}\big)$, where $\ell_x{=}0.04$ and $\sigma^2{=}1.0$.
The PDE is discretized on a uniform $501$-point grid using a centered flux form.
Inputs use $M{=}300$ fixed sensor locations selected from the $501$-point grid by linearly spaced index sampling; targets are $u(y_k)$ at $N_q{=}300$ query locations selected in the same way.
The precomputed dataset contains $10{,}000$ training and $1{,}000$ test examples.

\paragraph{Elastic Plate}
We consider linear elasticity on the unit square with a central circular hole (radius $0.25$ at $(0.5,0.5)$) in plane stress.
Material parameters are $E=3.0\times10^7$ and $\nu=0.3$.
The left edge $x{=}0$ is clamped, $(u,v){=}(0,0)$, and a traction in the $x$-direction is applied on the right edge $x{=}1$.
Inputs are samples of the boundary load $f(\boldsymbol x)$ on the right edge at $M{=}301$ equally spaced points $\boldsymbol x_j{=}(1,y_j)$, where $y_j\in[0,1]$.
Targets are the horizontal displacement $u(\boldsymbol x_k)$ at $N_q{=}1048$ mesh nodes of the plate domain.
Fields are standardized to zero mean and unit variance using global statistics; coordinates are left unscaled.
We use $1{,}900$ training and $100$ test cases.

\paragraph{Heat Conduction with Multiple Sources}
We use the free-space Green's function of the 2D Laplacian (conductivity $k{=}1$) with softening $\epsilon{=}10^{-1}$, with source and query locations restricted to $[0,1]^2$.
For each sample, $M\in\{10,30\}$ source locations are i.i.d.\ $\mathrm{Uniform}([0,1]^2)$ and source strengths are log-uniform on $[10^{-1},1]$.
The temperature field $u$ is evaluated analytically.
For each $M$, we generate $10{,}000$ training and $1{,}000$ test samples.
Per sample, we query $N_q{=}8192$ locations via an adaptive scheme: seed a $25{\times}25$ uniform grid ($625$ points), compute a coarse field, then sample the remaining $N_q-625$ queries from a $128{\times}128$ proposal with probabilities proportional to $\exp(\beta\,m)$, where $m$ is the min--max normalized magnitude of the coarse temperature field.
We use $\beta{=}9.0$ for $M{=}10$ and $\beta{=}8.0$ for $M{=}30$.
Inputs are the source sets $\{(x_i,y_i,s_i)\}_{i=1}^{M}$; targets are $u(x,y)$ at the queried locations.

\paragraph{Advection–Diffusion}
We use the exact 2D advection–diffusion Green's function with constant diffusivity $d{=}0.1$ and uniform flow $\mathbf{v}{=}[1,0]^\top$ (wind along $+x$), with source and query locations restricted to $[0,1]^2$.
Each sample has $M{=}30$ i.i.d.\ sources in $[0,1]^2$ with emission rates drawn log-uniformly on $[10^{-1},1]$.
The concentration is evaluated in closed form with near-source regularization.
Each dataset contains $10{,}000$ training and $1{,}000$ test examples, and reported metrics average over all $1{,}000$ test samples.
Queries use $N_q{=}4096$ adaptively selected locations: seed a $25{\times}25$ uniform grid, form a coarse field, then sample the remaining points from a $128{\times}128$ proposal with probabilities proportional to $\exp(\beta\,m)$, where $m$ is the min–max normalized coarse concentration.
We use $\beta{=}4.0$.
Inputs are the source sets $\{(\boldsymbol{x}_i,s_i)\}_{i=1}^{M}$; targets are the concentrations $u(\boldsymbol{x}_k)$ at the sampled queries.

\paragraph{Phase-Screen Diffraction}
On $\Omega=[0,1)^2\cong\mathbb{T}^2$, each sample is an unordered set of $M{=}10$ phase bumps $\{(\boldsymbol{x}_i,\alpha_i,\ell_i)\}_{i=1}^{10}$ with $\boldsymbol{x}_i\sim\mathrm{Uniform}(\Omega)$, $\alpha_i\sim\mathrm{Uniform}(-\pi/2,\pi/2)$, and fixed width $\ell_i=0.4$. These parameters define a periodic minimal-image Gaussian phase field, which is combined with a Gaussian envelope of width $\sigma_{\mathrm{env}}=0.2$ and zero carrier wave number to form the initial condition. The field is then propagated to time $t_0=0.1$ under the free Schr\"odinger equation using an FFT-based spectral solver on a uniform $128{\times}128$ grid. Inputs are the bump coordinates together with the per-bump features $[\alpha_i,\ell_i]$, and targets are the propagated complex field values at all grid points, stored as separate real and imaginary channels. We use $20{,}000$ training and $1{,}000$ test samples.

\paragraph{Optimal Transport Problem}
On $\Omega=[-5,5]^2$ we construct ground truth by solving optimal transport on a uniform $80{\times}80$ grid using a Sinkhorn solver. The source density is a balanced mixture of two Gaussians with means drawn i.i.d.\ from $[-2,2]^2$ and diagonal covariances with entries in $[0.1,1.0]$; the target density is fixed as $\rho_1=\mathcal{N}(\boldsymbol{0},\,0.5\,\mathbf{I}_2)$. From the resulting coupling we form the barycentric transport map and interpolate it linearly away from the grid. For each sample, the branch input is a set of $512$ i.i.d.\ source samples from $\rho_0$, represented with unit weights. Supervision is given at $1024$ query points sampled independently of the input sensor set, with targets equal to the displacement vectors $u(\boldsymbol{y})-\boldsymbol{y}$. The corresponding $80{\times}80$ grid velocity field is also stored for visualization and auxiliary evaluation. We use $20{,}000$ training and $1{,}000$ test examples.

\end{document}